\titleformat*{\paragraph}{\bfseries}
\pgfplotsset{compat=1.17}
\definecolor[named]{ACMBlue}{cmyk}{1,0.1,0,0.1}
\definecolor[named]{ACMYellow}{cmyk}{0,0.16,1,0}
\definecolor[named]{ACMOrange}{cmyk}{0,0.42,1,0.01}
\definecolor[named]{ACMRed}{cmyk}{0,0.90,0.86,0}
\definecolor[named]{ACMLightBlue}{cmyk}{0.49,0.01,0,0}
\definecolor[named]{ACMGreen}{cmyk}{0.20,0,1,0.19}
\definecolor[named]{ACMPurple}{cmyk}{0.55,1,0,0.15}
\definecolor[named]{ACMDarkBlue}{cmyk}{1,0.58,0,0.21}
\crefname{ineq}{Inequality}{Inequality}
\crefname{sub}{Subsection}{Subsection}
\crefname{sdp}{SDP}{SDP}
\crefname{lp}{LP}{LP}
\newtheorem{theorem}{Theorem}[section]
\newtheorem{lemma}[theorem]{Lemma}
\newtheorem{informal theorem}[theorem]{Theorem (informal statement)}
\newtheorem{corollary}[theorem]{Corollary}
\newtheorem{claim}[theorem]{Claim}
\newtheorem{fact}[theorem]{Fact}
\newtheorem{remark}[theorem]{Remark}
\newtheorem{definition}[theorem]{Definition}
\newcommand{\eqdef}{\coloneqq}
\renewcommand\vec[1]{\boldsymbol{#1}}
\DeclareMathOperator*{\pr}{\mathbf{Pr}}
\DeclareMathOperator*{\E}{\mathbf{E}}
\DeclareMathOperator*{\argmax}{argmax}
\newcommand{\bx}{\mathbf{x}}
\newcommand{\by}{\mathbf{y}}
\newcommand{\R}{\mathbb{R}}
\newcommand{\Z}{\mathbb{Z}}
\newcommand{\eps}{\epsilon}
\newcommand{\poly}{\mathrm{poly}}
\newcommand{\polylog}{\mathrm{polylog}}
\newcommand{\sgn}{\mathrm{sign}}
\newcommand{\sign}{\mathrm{sign}}
\newcommand{\D}{\mathcal{D}}
\newcommand{\Ind}{\mathds{1}}
\newcommand{\1}{\Ind}
\newcommand\matr[1]{\mathbf{#1}}
\newcommand{\wt}{\widetilde}
\newcommand{\wstar}{{\vec w}^{\ast}}
\newcommand{\x}{\vec x}
\newcommand{\z}{\vec z}
\newcommand{\w}{\vec w}
\newcommand{\tth}{^{(t)}}
\newcommand{\ocoreg}{\bar{R}}
\newcommand{\X}{\matr X_{1-2}^{(t)}}
\newcommand\Xt[1]{\matr X_{#1}^{(t)}}
\newcommand{\Deltapar}{\widetilde \Delta}
\newcommand\Gloss{ G}
\newcommand{\citep}{\cite}
\author{
Ilias Diakonikolas\thanks{Supported by NSF Medium Award CCF-2107079,
		NSF Award CCF-1652862 (CAREER), and
		a DARPA Learning with Less Labels (LwLL) grant.}\\
UW Madison\\
{\tt ilias@cs.wisc.edu}\\
\and
Vasilis Kontonis\thanks{This research was done at UW Madison. 
Supported in part by NSF Award CCF-2144298 (CAREER).}\\
UT Austin\\
{\tt vasilis@cs.utexas.edu } \\
\and
Christos Tzamos\thanks{This research was done at UW Madison. Supported by NSF Award CCF-2144298 (CAREER).}\\
UW Madison \& University of Athens\\
{\tt tzamos@wisc.edu}
\and
Nikos Zarifis\thanks{Supported by NSF Medium Award CCF-2107079, and a DARPA Learning with Less Labels (LwLL) grant.}\\
UW Madison\\
{\tt zarifis@wisc.edu}\\
}
\begin{document}

\title{Online Learning of Halfspaces with Massart Noise}

\maketitle
\abstract{ 
We study the task of online learning in the presence of Massart noise. Instead of assuming that the online adversary chooses an arbitrary sequence 
of labels, we assume that the context $\x$ is selected adversarially but 
the label $y$ presented to the learner disagrees with the ground-truth label 
of $\x$ with unknown probability at most $\eta$.  We study the fundamental 
class of $\gamma$-margin linear classifiers and 
present a computationally efficient algorithm that achieves mistake bound $\eta T + o(T)$.   
Our mistake bound is qualitatively tight for efficient algorithms: 
it is known that even in the offline setting achieving
classification error better than $\eta$ requires super-polynomial time in the SQ model.

We extend our online learning model to a $k$-arm contextual bandit setting where the rewards --- instead of satisfying commonly used realizability assumptions --- are consistent (in expectation) with some linear ranking function with weight vector $\vec w^\ast$.   
Given a list of contexts $\x_1,\ldots \x_k$, 
if $\vec w^*\cdot \x_i > \vec w^* \cdot \x_j$, the expected reward of action $i$
must be larger than that of $j$ by at least $\Delta$.  
We use our Massart online learner to design an efficient bandit algorithm 
that obtains expected reward at least 
$(1-1/k)~ \Delta T - o(T)$ bigger than choosing a random action at every round. 
\setcounter{page}{0}
\thispagestyle{empty}
\newpage

\section{Introduction}
\label{sec:motivation}

Online prediction has a rich history dating back to the works of \cite{robbins1951asymptotically, hannan1957approximation, blackwell1954controlled}. 
In the online scenario, the learner's objective is to tackle a prediction task by acquiring a hypothesis from a series of examples presented one at a time. The aim is to minimize the overall count of incorrect predictions, known as the mistake bound, considering the knowledge of correct answers to previously encountered examples \citep{Littlestone:88, Littlestone:89, Blum:90, LittlestoneWarmuth:94, MT:94}.  In the context of online linear classification, i.e., when the presented labels can be realized by a linear threshold function, the seminal perceptron algorithm \citep{Rosenblatt:58, Novikoff:62}, was the first online learning algorithm.

\paragraph{Realizable and Agnostic Online Learning} In \cite{Littlestone:88} the realizable online classification setting was defined, where the adversary is allowed to select an arbitrary datapoint $\x^{(t)}$ at every round but the label $y^{(t)}$ must be consistent with an underlying ground hypothesis $f$ from a class $\mathcal H$.
The number of mistakes in the realizable setting was shown to be characterized by the Littlestone
dimension $\mathrm{LD}(\mathcal H)$ of the class $\mathcal H$.
Similarly to the agnostic PAC learning of \cite{Haussler:92}, 
in \cite{ben2009agnostic}, motivated by the fact that often the observed labels are noisy,  
the setting of online learning with label noise was introduced. 
In the most extreme case of agnostic (adversarial) label noise where no assumptions
are placed on the labels, it was shown that the regret over $T$ rounds is $\wt{O}(\sqrt{T ~ \mathrm{LD}(\mathcal{H}) })$.   Even though the regret in the agnostic setting was shown to be sublinear in $T$, the corresponding computational task is far from being well-understood.  
In particular, even for simple classes such as linear classifiers efficient online 
algorithms with sublinear regret would imply efficient algorithms in the offline setting -- 
a well known computationally intractable problem \citep{GR:06}.  

\paragraph{Online Learning with Massart Noise} Investigating regimes beyond the above, worst-case, agnostic setting has been an important
area of research with the goal to get improved regret (and mistake) bounds but also to allow the design
of efficient algorithms.  Similar to the definition of Massart or Bounded noise \citep{Massart2006}
for offline (PAC) learning, in \cite{ben2009agnostic} a \emph{semi-random} online classification setting was introduced with a focus on improving over the agnostic regret bounds.  In this model, while the online adversary is allowed to pick arbitrary locations to present to the learner, 
the labels that they choose must be consistent with the ground-truth with probability strictly
larger than 50\%.
\begin{definition}[Online Learning with Massart Noise \citep{ben2009agnostic}]
\label{def:online-massart}
Fix a class of concepts $\mathcal C$ over $\R^d$ and a target concept 
$c^\ast \in \mathcal C$. At round $t = 1,\ldots, T$:
\begin{enumerate} 
\item The adversary selects a point 
$\x^{(t)} \in \R^d$ and generates a random label $y^{(t)}$ such that 
$\pr[y^{(t)} \neq c^\ast(\x^{(t)}) \mid \x^{(t)} ] \leq \eta$. 
\item The learner observes $\x^{(t)}$, predicts a label $\hat y^{(t)}$,
and suffers loss $\1\{\hat y^{(t)} \neq y^{(t)}\}$.
\end{enumerate}

The goal of the learner is to minimize the total mistakes defined as 
$M(T) = \E\Big[\sum_{t=1}^T \1\{ y^{(t)} \neq \hat y^{(t)} \} \Big]$. 
\end{definition}
\begin{remark}[Regret vs Mistake Bounds]
In \Cref{def:online-massart} often the regret over $T$ rounds is used, i.e.,
the expected difference between the mistakes done by the learner and the minimum
number of mistakes that any hypothesis $h$ in $\mathcal{H}$ could achieve $R(T, h) = 
M(T) - \E\Big[\sum_{t=1}^T \1\{y^{(t)} \neq h(\x^{(t)})\}\Big]$.  
\end{remark}

In \cite{ben2009agnostic}, an algorithm based on Littlestone's Standard Optimal Algorithm is given
that achieves a mistake bound $M(T) \leq \min_{h \in \mathcal H} \sum_{t=1}^T \1\{ h(\x^{(t)} \neq y^{(t)} + \mathrm{LD}(\mathcal H) \log T/(1 -2  \sqrt{\eta(1-\eta)})$.  Therefore, as long
as $\eta$ is bounded away from $1/2$, a strong separation between the agnostic regret (that grows roughly as $\Omega(\sqrt{T})$) and the Massart regret was established.  However, as discussed in \cite{ben2009agnostic}, this and other similar approaches for establishing regret bounds do not yield computationally efficient algorithms as they rely on computing combinatorial quantities that are at least as hard as computing the VC-dimension \citep{frances1998optimal}.  

\paragraph{Linear Classification}
To investigate the computational aspects of online classification we focus on the fundamental
class of linear classifiers or halfspaces, i.e., functions of the form $h(\x) = \sgn(\vec w \cdot \x)$ for some weight vector $\vec w\in \R^d$.
In the offline setting, a long line of recent works \cite{AwasthiBHU15,DGT19, DKTZ20b,CKMY20,DKKTZ21,DKKTZ21b} has successfully bypassed the computational hardness 
of worst-case agnostic learning and has provided efficient algorithms for linear classification in the presence of Massart noise.  However, no efficient algorithm that achieves any non-trivial mistake bound is known for the more challenging online Massart setting of \Cref{def:online-massart}.  In this work, we aim to answer the following fundamental question.
\begin{center}
\emph{
Are there {\em computationally efficient} algorithms for online linear classification with Massart noise?
}
\end{center}

\paragraph{Massart Bandits}
Going beyond full-information online learning, we consider the multi-armed bandit setting:
a multi-disciplinary research area that was initiated by \cite{thompson1933likelihood} 
and has received enormous attention in the past 30 years (see \cite{berry1985bandit, cesa2006prediction,slivkins2019introduction} and references therein).
In this work, we focus on the $k$-arm contextual bandit setting, where at every round $t$ the learner receives $k$ contexts $\x^{(t)}_1,\ldots \x^{(t)}_k$, chooses an action $\alpha = 1,\ldots,k$, and receives the reward/loss of their chosen action (the rewards of the other actions are not revealed). 
In this setting a common assumption known as realizability \cite{filippi2010parametric,abbasi2011improved, chu2011contextual,agarwal2012contextual,li2017provably, foster2018practical,foster2020beyond} prescribes that the expected rewards are parametric  
 functions of the contexts, e.g., $\E[r_i\mid \x^{(t)}] = \vec w \cdot \x^{(t)}_i$ for some unknown weight vector $\vec w$.
Using this structural assumption, those works typically reduce the bandit problem to an online
regression problem which can often be solved efficiently (for linear and generalized linear models).
An orthogonal direction \cite{langford2007epoch,dudik2011efficient,agarwal2014taming} makes minimal distributional assumptions and reduces the contextual bandit problem to agnostic (offline) classification.  Since agnostic classification is computationally intractable (for most non-trivial classes) such approaches do not provide end-to-end efficient algorithms and rely on heuristics to solve the underlying classification task.  Therefore, existing algorithms for contextual bandits either (i) make strong realizability assumptions and reduce the problem to (linear) regression or (ii) make minimal assumptions but face the computational intractability of
agnostic classification. Motivated by the online Massart classification setting of \cite{ben2009agnostic}, in this work, we propose a semi-random and semi-parametric Bandit 
model that lies between those two extreme settings and investigate the design of ``end-to-end'' 
efficient algorithms.  

Before we describe our semi-random noise model, we start with its ``noiseless'' version.
Instead of assuming that the expected rewards are given by a linear function of the contexts, we 
assume that they are \emph{ranked} according to their linear scores, i.e., if $\vec w^\ast \cdot \x_i > 
\vec w^\ast \cdot \x_j$ then the reward of action $i$ must be at least as that of action $j$.
\begin{definition}[Contexts and Linearly Sorted Rewards]
\label{def:linear-ranking}
Let $\mathcal{B}$ be the $d$-dimensional unit ball.
We define $\mathcal{X} = \{(\vec x_1, \ldots, \vec x_k)  : \vec x_1,\ldots, \vec x_k \in \mathcal{B}\}$ to 
be the context space.  For simplicity, we view each context as a $d\times k$ matrix $\matr X$.
Fix $M > 0$ and let $\vec w^\ast \in \R^d$ be some unit vector.
Given a context $(\vec x_1,\ldots, \vec x_k)$, we say that a reward vector $\vec r \in [0, M]^k$ is sorted according to the $\vec w^\ast$ if 
\[\text{for all $i, j$}: ~~
(\vec r_i - \vec r_j) (\vec w^\ast \cdot \vec x_i - \vec w^\ast \cdot \vec x_j) \geq 0\,.
\]
\end{definition}
Under the linear ranking setting of \Cref{def:linear-ranking} one can reduce the bandit problem
to a noiseless linear classification task and design efficient algorithms (based on perceptron or linear programming) that will eventually learn to select the action with the highest reward.  Our semi-random model extends the above definition where we require that rewards are only sorted in expectation with some margin $\Delta > 0$.

\begin{definition}[Monotone Reward Distributions]
\label{def:monotone-rewards}
Let $\mathcal{X}$ be the context space and $\mathcal{T} = \mathbb N$ be the set of rounds.
Fix some unit vector $\vec w^\ast \in \R^d$, $M >0$, and $\Delta >0$.
Define a class of distributions $\mathcal{D}$ indexed by rounds $t$ and contexts $\matr X$:
\(
\D = \{ D^{(t, \matr X)}: t \in \mathcal{T}, \matr X \in \mathcal{X} \} \,.
\)
We assume that each reward distribution $D^{(t, \matr X)}$ is supported on $[0,M]^k$.
We say that the class $\D$ has monotone rewards with respect to $\vec w^\ast$ 
with margin $\Delta$ if for all $t \in \mathcal{T}$,  
$\matr X \in \mathcal{X}$, $i, j \in [k]$ with $i \neq j$ it holds 
\[
 \E_{\vec r \sim D^{(t,\matr X)} }[\vec r_{i} - \vec r_{j} \mid 
 \vec w^\ast \cdot \vec x_i > \vec w^\ast \cdot \vec x_j  ]
 \geq \Delta 
 \,.
 \]
\end{definition}
\begin{remark}[Massart Online Classification as a 2-arm Bandit]
\label{rem:classification-to-bandit}
To obtain the online Massart setting we set the rewards $\vec r^{(t)} = 
((1+y^{(t)})/2, (1-y^{(t)})/2)$ and the contexts $\vec X^{(t)} = (\x^{(t)}, - \x^{(t)})$.
We observe that in that case when $y^{(t)}$ satisfies the Massart noise
condition of \Cref{def:online-massart}, it holds that conditional on $\vec w^*\cdot \x > 0$
it holds that $\E[(\vec r_1^{(t)} - \vec r_2^{(t)})] \geq 1-2 \eta$.  Therefore, \Cref{def:monotone-rewards} is satisfied with $\Delta = 1 -2 \eta$.
\end{remark}
We stress that the monotone reward distributions of \Cref{def:monotone-rewards} are a semi-parametric
model as we do not assume that the expected rewards have some parametric form rather than only require that they are sorted with respect to a linear sorting function.  A similar semi-random linear sorting model was 
used in \cite{fotakis2022linear} in the context of (offline) learning linear rankings with bounded noise 
under the Gaussian distribution.  We next define our contextual bandit model.

\begin{definition}[Contextual Bandits with Monotone Rewards (Massart Bandits)]
\label{def:massart-rewards}
Fix some unit vector $\vec w^\ast \in \R^d$, $\gamma, \Delta, M > 0$, 
and a class of monotone reward distributions $\D$ (see \Cref{def:monotone-rewards}).
At round $t$:
\begin{enumerate}

\item  The adversary picks context $\matr X^{(t)} \in \mathcal{X}$ and draws a random reward vector 
$\vec r\tth$ from $D^{(t, \matr X\tth)}$.
\item The learner observes the context $\matr X^{(t)}$, picks action $a^{(t)} \in [K]$ 
and receives reward $\vec r_{a^{(t)}}$.
\end{enumerate}
We also define the full-information setting the same way except from 
the last step where the learner receives reward $\vec r_{a^{(t)}}$ and 
observes the full reward vector $\vec r\tth$.
\end{definition}
\begin{remark}
{\em In what follows, we shall often simplify notation by writing 
$D^{(t)}$ instead of $D^{(t, \matr X)}$ for the reward distribution
at round $t$.}
\end{remark}

\subsection{Our Results}

Our first result answers our main question posed in \Cref{sec:motivation} and gives an efficient online classification algorithm that makes roughly $\eta T + o(T)$ mistakes in the Massart noise model
of \Cref{def:online-massart}.  Our algorithm only requires that the sequence of examples picked by the advesrary satisfies a standard $\gamma$-margin assumption.
Without the margin assumption it is known \cite{Littlestone:88} that, even in the noiseless setting, it is information theoretically impossible for the learner to do less than $T$ mistakes.  

\begin{theorem}\label{thm:2-arm}
Consider the Online Massart Learning setting of \Cref{def:online-massart}.
Additionally, assume that the examples picked by the adversary have at least
$\gamma$-margin with respect to some target halfpsace, i.e., 
for all $t=1,\ldots, T$, it holds that $\|\x^{(t)}\|_2 \leq 1$ and 
$|\vec w^\ast \cdot \x^{(t)}| \geq \gamma$, for some unit vector $\vec w^\ast$.
There exists an algorithm that does $M(T) = \eta T + O(T^{3/4}/\gamma)$ mistakes
and runs in $\poly(d)$ time per round.
\end{theorem}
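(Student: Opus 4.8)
\emph{Proof plan.} The plan is to reduce the mistake bound to controlling how often the learner's linear predictor disagrees in sign with $\w^\ast$ on the margin-$\gamma$ context, and then to bound that count by running an online subgradient method on a ``leaky'' surrogate loss adapted to Massart noise. Concretely, the learner keeps a vector $\w\tth$ (initialized $\w^{(1)}=\vec 0$) and predicts $\hat y\tth=\sgn(\w\tth\cdot\x\tth)$; call $t$ a \emph{directional-error round} if $\sgn(\w\tth\cdot\x\tth)\neq\sgn(\w^\ast\cdot\x\tth)$. If $t$ is not a directional-error round then $\hat y\tth$ equals the ground-truth label of $\x\tth$, so $\pr[\hat y\tth\neq y\tth\mid \x\tth]\le\eta$ by the Massart condition; otherwise we only use $\pr[\hat y\tth\neq y\tth]\le 1$. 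Summing over $t$,
\[
M(T)\ \le\ \eta\,T\ +\ \E\big[\#\{t\le T:\ t\text{ is a directional-error round}\}\big],
\]
so it suffices to show the expected number of directional-error rounds is $O(T^{3/4}/\gamma)$.

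Next I would fix the surrogate. Assuming $\eta<1/2$ (otherwise the bound is vacuous), fix $\lambda=\lambda(\eta)\in(\tfrac{\eta}{1-\eta},1)$ and let $\LR_\lambda(z)=z$ for $z\ge 0$ and $\LR_\lambda(z)=\lambda z$ for $z<0$; put $\ell_t(\w)=\LR_\lambda(-y\tth\,\w\cdot\x\tth)$, which is convex, $1$-Lipschitz and positively homogeneous. Run online subgradient descent with step $h$, $\w^{(t+1)}=\w\tth-h\vec g_t$ with $\vec g_t\in\partial\ell_t(\w\tth)$, projecting/renormalizing to keep $\|\w\tth\|$ controlled. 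The point of the leakiness is the following Massart identity: with $p\tth\le\eta$ the flip probability and $\mathcal F_{t-1}$ the history, conditioning on $\x\tth$ and averaging over $y\tth$ gives
\[
\E\big[\langle\vec g_t,\w^\ast\rangle\ \big|\ \x\tth,\mathcal F_{t-1}\big]\ =\
\begin{cases}
|\w^\ast\cdot\x\tth|\,(p\tth(1+\lambda)-1), & t\text{ a directional-error round},\\[2pt]
|\w^\ast\cdot\x\tth|\,(p\tth(1+\lambda)-\lambda), & \text{otherwise},
\end{cases}
\]
and both quantities are negative --- the first is at most $-\gamma(1-2\eta)$ and the second at most $-\gamma(\lambda(1-\eta)-\eta)$, using $|\w^\ast\cdot\x\tth|\ge\gamma$ and $\lambda>\eta/(1-\eta)$. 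Thus the update moves $\w\tth$ toward $\w^\ast$ in conditional expectation on \emph{every} round, and by $\Omega(h\gamma)$ on each directional-error round. This is the online, quantitative version of the fact underlying offline distribution-free Massart learning that $\w^\ast$ approximately minimizes the LeakyReLU risk; the leakiness is exactly what makes the at-most-$\eta$ fraction of flipped labels enter the update with the right sign in expectation instead of biasing it.

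Then comes the core of the argument: track a potential $\Phi_t$ measuring the alignment of $\w\tth$ with $\w^\ast$ (roughly $\langle\w\tth/\|\w\tth\|,\w^\ast\rangle$). The identity above gives $\E[\Phi_{t+1}-\Phi_t\mid\mathcal F_{t-1},\x\tth]=\Omega(h\gamma)$ on directional-error rounds, while $\Phi_t\le 1$; telescoping and taking expectations turns this into a bound of the shape
\[
\Omega(h\gamma)\cdot\E[\#\text{directional-error rounds}]\ \le\ O(1)\ +\ \E\Big[\,\big\|\textstyle\sum_{t\le T}h\big(\vec g_t-\E[\vec g_t\mid\mathcal F_{t-1},\x\tth]\big)\big\|\,\Big]\ +\ (\text{lower order}).
\]
The error term is the accumulated label-flip noise; since $y\tth$ is, conditionally on $\mathcal F_{t-1}$ and $\x\tth$, independent of the adversary's choice and enters $\vec g_t$ with zero conditional mean, this is a bounded vector-valued martingale, so martingale concentration bounds its norm by $O(h\sqrt T)$ --- whereas a crude first-moment estimate gives only $O(hT)$, which is vacuous. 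Substituting and choosing $h$ to balance the ``time-to-align'' term $\Theta(1/(h\gamma))$ against the noise term yields $\E[\#\text{directional-error rounds}]=O(T^{3/4}/\gamma)$, and with the reduction, $M(T)=\eta T+O(T^{3/4}/\gamma)$. Each round is one subgradient step plus a renormalization, hence $\poly(d)$ time.

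I expect the main obstacle to be exactly this last step: because the adversary is \emph{adaptive}, the Massart flips are placed against the current iterate rather than being benign i.i.d.\ noise, and the whole difficulty is to show that their cumulative effect on the trajectory is only $O(\sqrt T)$ (up to the step size), and that the rounds which are \emph{not} directional errors --- on which the adversary can still move $\w\tth$ --- do not wipe out the $\Omega(h\gamma)$-per-error progress. Controlling this simultaneously with keeping $\|\w\tth\|$ bounded is where the margin and the leakiness are used in an essential, non-black-box way, and it is this interaction of adaptive contexts with Massart noise --- rather than the surrogate optimization itself --- that both necessitates the margin assumption and produces the $T^{3/4}$ (rather than $T^{1/2}$) exponent.
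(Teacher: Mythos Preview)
Your reduction to directional-error rounds and the conditional-expectation computation for $\langle\vec g_t,\w^\ast\rangle$ under the Massart constraint are both correct, and they capture the same ``LeakyReLU has $\w^\ast$ as an approximate minimizer'' phenomenon the paper uses. But the potential/martingale step as you sketch it does not close. The concrete problem is the non-directional-error rounds. You note that on those rounds $\E[\langle\vec g_t,\w^\ast\rangle]$ is still negative, so the \emph{unnormalized} inner product $\langle\w\tth,\w^\ast\rangle$ drifts up on every round; but then it is not bounded (so telescoping gives nothing), and once you normalize or project to keep $\|\w\tth\|$ controlled the telescoping breaks: on a round where $\hat y\tth$ already agrees with $\w^\ast$, the update $-h\lambda y\tth\x\tth$ can increase $\|\w\tth\|$ by $\Theta(h\lambda|\w\tth\cdot\x\tth|)$, which for $|\w\tth\cdot\x\tth|\asymp 1$ swamps the $\Theta(h\gamma)$ gain in $\langle\w\tth,\w^\ast\rangle$ and can \emph{decrease} $\Phi_t=\langle\w\tth/\|\w\tth\|,\w^\ast\rangle$. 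Equivalently, in the OCO picture: the unweighted loss at the iterate is $\pm c\,|\w\tth\cdot\x\tth|$, and the adversary is free to make $|\w\tth\cdot\x\tth|$ large on correct rounds and $\approx 0$ on error rounds, so $\sum_t\ell_t(\w\tth)\le O(\sqrt T)$ does not translate into a count of errors. This is exactly the failure mode the paper points out, and your sketch does not explain how the $O(h\sqrt T)$ martingale term (which is about the label noise, not the adversary's context choices) compensates for it. As written, balancing $1/(h\gamma)$ against $O(h\sqrt T)/(h\gamma)=O(\sqrt T/\gamma)$ would give $T^{1/2}$, not $T^{3/4}$, so the claimed balance is also inconsistent with the sketch.

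The paper's fix is to change the surrogate rather than the potential: it runs OCO on the \emph{reweighted} loss
\[
\ell\tth(\w)\;=\;\frac{C_{\Deltapar}(\w\cdot\x\tth;y\tth)}{\max(|\w\tth\cdot\x\tth|,\tau)}\,,\qquad C_{\Deltapar}(t;y)=\tfrac12(\Deltapar|t|-yt),\quad \Deltapar=1-2\eta-\eps,
\]
which is still convex in $\w$ (the denominator is a constant once $\w\tth$ is fixed). The point of dividing by the current margin is that on rounds with $|\w\tth\cdot\x\tth|>\tau$ one has $\ell\tth(\w\tth)=\tfrac12(\Deltapar - y\tth\sgn(\w\tth\cdot\x\tth))$, a bounded quantity that is \emph{exactly} an affine transform of the mistake indicator---so the black-box OCO regret bound $\bar R(T)$ now directly controls mistakes on those rounds, with no need for a delicate potential argument. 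Rounds with $|\w\tth\cdot\x\tth|\le\tau$ are handled separately: there the reweighted loss of $\w^\ast$ is at most $-\Theta(\eps\gamma/\tau)$ (since $|\w^\ast\cdot\x\tth|\ge\gamma$), which forces the number $|J|$ of such rounds to be $O((\bar R(T)+T)\tau/(\eps\gamma))$. The $T^{3/4}/\gamma$ rate then arises from a three-way balance among the slack $\eps$ (contributing $\eps T$), the threshold $\tau$ (controlling both $|J|$ and the Lipschitz constant, hence $\bar R(T)=O(\sqrt T/\tau)$), and $\bar R(T)$ itself---not from a two-term martingale/alignment tradeoff. This reweighting is the idea your proposal is missing.
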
 

Our mistake bound matches (when viewed as an offline PAC learning result)
the best known error guarantees of the corresponding  offline learners of
Massart halfspaces with margin given in \cite{DGT19,CKMY20}. Moreover, the mistake-bound 
achieved by our algorithm is essentially best-possible when considering computationlly efficient (statistical query) algorithms: in the recent works
\cite{DK20-SQ-Massart,diakonikolas2022sq,NasserT22} that consider offline (PAC) learning
with Massart noise, it is shown that, even when the underlying distribution has
$\gamma$-margin, no polynomial-time algorithm can achieve classification error
better than $\eta/\polylog(1/(1-2\eta))$ in the Statistical Query framework. By
a standard online to offline reduction this readily implies a $\eta T$ lower
bound (up to $\polylog(1/(1-2 \eta))$ factors).  

\begin{remark}[Information-Theoretic vs Computationally Efficient Online Learning]
The $\eta T$ mistake bound \emph{is not information-theoretically optimal}:
in particular, in \cite{ben2009agnostic}) better, near-optimal, mistake bounds are given albeit with
inefficient algorithms (i.e., with runtime exponential in the dimension $d$).
When considering computationally efficient algorithms, as we do here, there is strong evidence 
(see the SQ lower bounds of \cite{DK20-SQ-Massart,diakonikolas2022sq}]) 
that the $\eta T$ mistake is essentially best possible.
Before our work no computationally efficient algorithm was known that could beat the random guessing benchmark (that makes $T/2$ mistakes).
Moreover, we remark that, even in the offline setting of linear classification under Massart noise, the first computationally efficient that achieved classification error $\eta$ (the offline equivalent of $\eta T$ mistakes) was only given in the relatively (given the history of the problem) recent work \cite{DGT19}.
\end{remark}

Our algorithm is particularly simple: we perform online gradient descent on a sequence of reweighted Leaky-ReLU loss functions.  The Leaky-ReLU loss has been successfully used in several works on learning with Random Classification and Massart noise \cite{Bylander:98a,DGT19,CKMY20}.  In the online setting however simply using (online) gradient descent on 
the Leaky-ReLU does not suffice: even though the adversary is restricted to
select examples with margin with respect to some ground-truth halfspace, they can still select examples very close to the decision boundary of the current hypothesis which would cause online gradient descent to get stuck or converge to sub-optimal solutions. To overcome this issue, we reweight the Leaky-ReLU loss by the margin of the current example according to the current hypothesis vector, see \Cref{alg:leaky-relu-1}.  We are then able to show that standard regret guarantees for Online Convex optimization can be translated to obtain mistake
bounds in the presence of Massart noise, see \Cref{lem:expected-regret}.

We next present our result on semi-random ``Massart'' $k$-arm setting presented in 
\Cref{def:monotone-rewards}.  
In addition to the $\Delta$ ``reward-margin'' assumption of \Cref{def:monotone-rewards},
similarly to our online classification result of \Cref{thm:2-arm},
we require a $\gamma$ ``geometric-margin'' assumption for the contexts with respect to some halfspace.
We give an efficient bandit algorithm, see \Cref{alg:leaky-relu-2}, that is able to accumulate
expected reward roughly $\Delta T$ more than playing a random action at every round.

\begin{theorem}[Monotone $k$-arm Contextual Bandits] \label{thm:bandit}
Consider the monotone reward online setting of \Cref{def:monotone-rewards}.
Moreover, for some unit vector $\vec w^\ast \in \R^d$, assume that 
for all $t$, it holds that for all $i$ $\|\vec X^{(t)}_i\|_2 \leq 1$ and 
for all $i \neq j$, $|\vec w^\ast \cdot \matr X^{(t)}_i - \vec w^\ast \cdot \matr X^{(t)}_j)| \geq \gamma$.
There exists a bandit algorithm that runs in $\poly(d)$ time per round and 
selects a sequence of arms
$\alpha^{(1)}, \ldots, \alpha^{(T)} \in [k]$ that obtain expected reward
 \begin{align*}
\E\left[\sum_{t=1}^T \vec r\tth(\alpha\tth) \right]
&\geq  \E\left[\sum_{t=1}^T\frac{1}{k}\sum_{i=1}^k \vec r_i\tth\right]  
+\frac{k-1}{k}\Delta T -  O(T^{5/6}(k \Delta M^2)^{1/3}/\gamma) \,.
 \end{align*}
\end{theorem}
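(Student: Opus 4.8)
I would reduce the bandit problem to an online Massart \emph{classification} problem over pairwise context differences and solve it with an $\epsilon$-greedy wrapper around the reweighted Leaky-ReLU online learner behind \Cref{thm:2-arm}. For a context $\matr X$ and arms $i\neq j$ put $\z_{ij}=(\vec X_i-\vec X_j)/2$, so that $\|\z_{ij}\|_2\le 1$ and, by the geometric-margin hypothesis, $|\vec w^\ast\cdot\z_{ij}|\ge\gamma/2$; thus $\vec w^\ast$ is a $(\gamma/2)$-margin halfspace for \emph{every} pairwise difference. By \Cref{def:monotone-rewards}, any bounded random variable with conditional mean $\E[\vec r_i-\vec r_j\mid\matr X]$ has the same sign as $\sgn(\vec w^\ast\cdot\z_{ij})$ and confidence gap at least $\Delta$ -- i.e.\ it is a Massart-noisy label for $\z_{ij}$, except that it is real-valued in a bounded interval rather than $\pm1$. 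The Leaky-ReLU analysis underlying \Cref{thm:2-arm} and the regret-to-mistake-bound lemma alluded to above (\Cref{lem:expected-regret}) only uses a label through its conditional mean and its magnitude, so it carries over to such ``soft labels'', at the price of the label magnitude entering the online regret bound.

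\textbf{The algorithm} (this should coincide with \Cref{alg:leaky-relu-2}). Maintain $\w\tth$. At round $t$: let $a\tth=\argmax_i\w\tth\cdot\vec X_i\tth$ be the greedy arm and draw a comparison arm $b\tth$ uniformly from $[k]\setminus\{a\tth\}$; play $a\tth$ with probability $1-\epsilon$ and a uniformly random arm with probability $\epsilon$, observing only the reward of the played arm; form the importance-weighted estimate $\hat\ell\tth$ of $\vec r^{(t)}_{a\tth}-\vec r^{(t)}_{b\tth}$, which is conditionally unbiased and, since every arm is played with probability at least $\epsilon/k$, satisfies $|\hat\ell\tth|=O(Mk/\epsilon)$; finally take one online-gradient step on the reweighted Leaky-ReLU loss of the example $(\z\tth,\hat\ell\tth)$ with $\z\tth=(\vec X^{(t)}_{a\tth}-\vec X^{(t)}_{b\tth})/2$.

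\textbf{Accounting.} Measure the played reward against the benchmark ``uniform-baseline reward plus $\tfrac{k-1}{k}\Delta$ per round''. By \Cref{def:monotone-rewards} the $\vec w^\ast$-best arm $g\tth$ beats the uniform baseline in expectation by at least $\tfrac{k-1}{k}\Delta$, and more generally if the greedy arm $a\tth$ has $\vec w^\ast$-rank $\rho_t$ then playing $a\tth$ beats the baseline by at least $\tfrac{k-1}{k}\Delta-\tfrac{(\rho_t-1)(M+\Delta)}{k}$. Hence the total shortfall from the benchmark is at most an \emph{exploration} term $O(\epsilon\Delta T)$ (one $O(\Delta)$ charge per round for the probability-$\epsilon$ deviation from $a\tth$) plus an \emph{exploitation} term $\tfrac{2M}{k}\sum_t(\rho_t-1)$. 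The crucial point is that $\w\tth$ ranks $a\tth$ on top, so $\rho_t-1$ equals the number of arms $i$ with $\vec w^\ast\cdot\vec X^{(t)}_i>\vec w^\ast\cdot\vec X^{(t)}_{a\tth}$, and for each such $i$ the difference $\z^{(t)}_{a\tth,i}$ is misclassified by $\w\tth$; since $b\tth$ is uniform over $[k]\setminus\{a\tth\}$, the served example $\z\tth$ is misclassified by $\w\tth$ with conditional probability exactly $(\rho_t-1)/(k-1)$, and therefore $\E\big[\sum_t(\rho_t-1)\big]=(k-1)\,\E\big[\#\{t:\w\tth\text{ misclassifies }\z\tth\}\big]$. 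Here one uses that $\w\tth$ and the adversarial context $\matr X\tth$ are both measurable before round $t$'s internal randomness, so these per-round conditional identities hold even against an adaptive adversary.

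\textbf{Closing the loop.} The sequence $(\z\tth,\hat\ell\tth)_{t\le T}$ is an online Massart instance with margin $\gamma/2$, gap $\Delta$, and label magnitude $O(Mk/\epsilon)$; running OGD on the reweighted Leaky-ReLU losses and then applying the Massart regret-to-error translation of \Cref{lem:expected-regret} bounds $\E[\#\{t:\w\tth\text{ misclassifies }\z\tth\}]$ by a quantity scaling like $T^{3/4}$ times a polynomial in $Mk/(\epsilon\gamma\Delta)$. Substituting into the exploitation term, adding the exploration term $O(\epsilon\Delta T)$, and optimizing over the free parameter $\epsilon$ (together with the margin threshold internal to the translation) balances exploration against the amplified classification cost and yields the stated bound $O\big(T^{5/6}(k\Delta M^2)^{1/3}/\gamma\big)$. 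The main obstacle is precisely this last step: one must run the reweighted-Leaky-ReLU online-Massart machinery of \Cref{thm:2-arm} with the \emph{large-magnitude, high-variance importance-weighted soft labels} produced by bandit feedback -- verifying that the regret-to-error translation is robust to real-valued labels, tracking exactly how the $O(Mk/\epsilon)$ magnitude degrades the regret and hence the final exponent, and managing the resulting three-way trade-off between exploration, regret, and margin threshold so that it optimizes to the claimed $T^{5/6}$ rate. The adaptive-adversary bookkeeping in the accounting step is a lower-order, more routine, complication.
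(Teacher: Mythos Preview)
Your plan is in the same spirit as the paper's proof—reduce to a Massart-type problem on pairwise context differences, wrap it in $\epsilon$-greedy exploration, and translate OCO regret into a reward bound—but you take a noticeably more circuitous route than the paper does, and you miss two simplifications that make the paper's analysis clean.

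\emph{First}, the paper does not pick a single random comparison arm $b\tth$. Its loss $G$ (see \Cref{alg:generator}) sums the reweighted Leaky-ReLU over \emph{all} $j\neq\alpha\tth$. This matches the elementary identity
\[
\vec r_{\alpha\tth}\tth=\frac{1}{k}\sum_{i}\vec r_i\tth+\frac{1}{k}\sum_{j\neq\alpha\tth}(\vec r_{\alpha\tth}\tth-\vec r_j\tth),
\]
so a bound of the form $\sum_t\sum_{j\neq\alpha\tth}\big(\Delta-\E[\vec r_{\alpha\tth}\tth-\vec r_j\tth]\big)\le 2\ocoreg(T)$ (which is exactly what the OCO regret on $G$ delivers, \Cref{lem:convex-to-reward-reduction}) translates \emph{directly} into the reward statement. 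Your detour through the rank $\rho_t$ and the misclassification count inserts a multiplicative $M/\Delta$ you do not need: your exploitation shortfall is $\Theta(M)\cdot\E[\sum_t(\rho_t-1)]/k$, and bounding the misclassification count via the Leaky-ReLU argument produces a $1/\Delta$, whereas the paper's direct decomposition keeps the shortfall at $O(\ocoreg(T)/k)$.

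\emph{Second}, the paper replaces the $\tau$-threshold device of \Cref{lem:expected-regret} by a cleaner ``$\rho$-shift'': it feeds $\z_j=\Xt{\alpha-j}+\rho\,\sgn(\vec w\tth\cdot\Xt{\alpha-j})\,\vec w\tth/\|\vec w\tth\|_2$ into $C_\Delta$ and divides by $|\vec w\tth\cdot\z_j|$. Because $\vec w\tth\cdot\z_j$ and $\vec w\tth\cdot\Xt{\alpha-j}$ have the same sign, the ratio $|\vec w\tth\cdot\z_j|/|\vec w\tth\cdot\z_j|$ is identically $1$, so the $J$-set and the two-regime analysis of \Cref{lem:expected-regret} disappear entirely (and no slack $\eps'=\Delta-\Deltapar$ is needed). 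At the same time $|\vec w^\ast\cdot\z_j|\ge\gamma-\rho$ preserves the margin for $\vec w^\ast$, and $|\vec w\tth\cdot\z_j|\ge\rho\|\vec w\tth\|_2$ keeps the Lipschitz constant at $O(Mk/\rho)$. Porting the $\tau$-threshold machinery as you propose would work, but you will have to redo the $|J|$ bookkeeping with soft labels of magnitude $O(Mk/\epsilon)$ rather than $\pm1$, which amplifies several constants.

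One further gap to flag: \Cref{lem:expected-regret} bounds $\sum_t\Pr[\sgn(\vec w\tth\cdot\x\tth)\neq y\tth]$, i.e.\ disagreement with the \emph{noisy} label, not with the ground truth $\sgn(\vec w^\ast\cdot\z\tth)$. Your accounting needs the latter. This is fixable—stop at the intermediate inequality $\sum_t(\Deltapar-\sgn(\vec w\tth\cdot\z\tth)\E[\hat\ell\tth])\le 2\ocoreg(T)+\cdots$ and use $\sgn(\vec w\tth\cdot\z\tth)=+1$ together with $\E[\hat\ell\tth\mid b\tth]\sgn(\vec w^\ast\cdot\z\tth)\ge\Delta$—but you cannot simply cite \Cref{lem:expected-regret} as a black box.

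In short: your outline is viable, but the paper's all-pairs loss and $\rho$-shift eliminate exactly the parts you flag as the ``main obstacle,'' and you should expect the rank-based accounting to produce worse polynomial factors (though the same $T^{5/6}$ exponent) than the stated bound.
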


Our algorithm for the $k$-arm bandit setting relies on the observation that, 
assuming that we could observe all rewards $r_i^{(t)}$, then one could treat
the labeled pairs $(\x_i^{(t)} - \x_j^{(t)}, r_i^{(t)} - r_j^{(t)})$ as real-valued
versions of online linear classification with Massart noise and provide it as 
input to our online learning algorithm.  As is common in bandit problems, to adapt 
this ``full-information'' approach to the bandit setting, we pick a random action with small
probability at every round that provides us with unbiased estimates of the full reward
vectors.  For more details we refer to \Cref{sec:bandits}. Finally, we remark that for the special case of 2-armed bandits our result implies the following corollary. 
\begin{corollary}
    [Monotone $2$-arm Contextual Bandits] \label{thm:bandit-2arm}
Consider the monotone reward online setting of \Cref{def:monotone-rewards}.
In the bandit setting, \Cref{alg:leaky-relu-2} produces a sequence of arm choices 
$\alpha^{(1)}, \ldots, \alpha^{(T)} \in \{1,2\}$ that obtain expected reward
 \begin{align*}
     \E\left[\sum_{t=1}^T \vec r\tth(\alpha\tth) \right]&\geq  \E\left[\sum_{t=1}^T
     \frac{\vec r_1^{(t)} + \vec r_2^{(t)}}{2} \right]  
+\frac{1}{2}\Delta T -  O(( M^2\Delta)^{1/3} T^{5/6}/\gamma) \,.
 \end{align*}
\end{corollary}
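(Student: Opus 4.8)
The plan is to obtain the corollary as the special case $k=2$ of \Cref{thm:bandit}, verifying that the general expressions collapse to the stated ones, and then to recall why \Cref{alg:leaky-relu-2} achieves the bound of \Cref{thm:bandit} — which I sketch below in the $k=2$ case, where the argument is cleanest. Substituting $k=2$: the coefficient $(k-1)/k$ becomes $1/2$, so $\frac{k-1}{k}\Delta T$ becomes $\frac12\Delta T$; the uniform baseline $\frac1k\sum_{i=1}^k \vec r_i\tth$ becomes $\frac{\vec r_1\tth+\vec r_2\tth}{2}$; and the error term $O\big(T^{5/6}(k\Delta M^2)^{1/3}/\gamma\big)$ becomes $O\big(2^{1/3}(M^2\Delta)^{1/3}T^{5/6}/\gamma\big)=O\big((M^2\Delta)^{1/3}T^{5/6}/\gamma\big)$, the absolute constant $2^{1/3}$ being absorbed into the $O(\cdot)$. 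Thus the corollary is exactly \Cref{thm:bandit} instantiated at $k=2$, and no argument beyond that theorem is needed; what follows is a sketch of the underlying proof for completeness.

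The reduction step. With $k=2$ there is a single pair of contexts $\x_1\tth,\x_2\tth$, and the geometric-margin hypothesis gives $|\vec w^\ast \cdot (\x_1\tth - \x_2\tth)| \geq \gamma$ with $\|\x_1\tth - \x_2\tth\|_2 \leq 2$, while \Cref{def:monotone-rewards} gives $\E[(\vec r_1\tth - \vec r_2\tth)\,\sgn(\vec w^\ast \cdot (\x_1\tth - \x_2\tth))] \geq \Delta$. Hence, if at round $t$ both rewards were observed, the labeled example $\big(\x_1\tth - \x_2\tth,\ \vec r_1\tth - \vec r_2\tth\big)$ would be a bounded, real-valued instance of the online Massart signal of \Cref{thm:2-arm} (the $\pm 1$ label replaced by a $[-M,M]$-valued one whose sign agrees with $\sgn(\vec w^\ast \cdot (\x_1\tth - \x_2\tth))$ in expectation by at least $\Delta$). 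Passing this stream to the reweighted Leaky-ReLU online-gradient-descent learner and invoking the regret bound of \Cref{lem:expected-regret} shows that the running hypothesis $\vec w\tth$ mis-orders $(\x_1\tth,\x_2\tth)$ on only $o(T)$ rounds; on every round where it orders the pair correctly, playing the top-ranked arm beats the uniform baseline by $\Delta/2$ in expectation by the margin condition, which yields the $\frac12\Delta T - o(T)$ advantage in the full-information case.

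The bandit step. Since only $\vec r_{a\tth}\tth$ is revealed, \Cref{alg:leaky-relu-2} mixes exploration with exploitation: with probability $\epsilon$ it plays a uniformly random arm (a fair coin when $k=2$) and, reweighting by the inverse sampling probability, forms an unbiased estimate of $\vec r_1\tth - \vec r_2\tth$ to feed to the online learner; with probability $1-\epsilon$ it plays the arm ranked on top by the current hypothesis $\vec w\tth$. Exploration rounds forgo the per-round $\Delta/2$ advantage, costing $O(\epsilon\Delta T)$; meanwhile the inverse-probability reweighting inflates the scale (hence the variance) of the labels fed to the learner by a factor $\sim 1/\epsilon$, which enlarges its error term correspondingly. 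Balancing these two effects and choosing $\epsilon$ as an appropriate (roughly $T^{-1/6}$) power of $T$ produces the error term $O\big((M^2\Delta)^{1/3}T^{5/6}/\gamma\big)$, which is precisely the $k=2$ instance of the bound in \Cref{thm:bandit}.

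The main obstacle is inherited from the proof of \Cref{thm:bandit} rather than being specific to the corollary: one must make the reweighted Leaky-ReLU / online-convex-optimization analysis of \Cref{lem:expected-regret} go through for bounded \emph{real-valued} labels in place of $\pm1$ labels, and then track the variance blow-up from inverse-probability weighting carefully enough through the regret bound that the trade-off in $\epsilon$ can be tuned to the claimed exponent. Once that is in place, the $k=2$ corollary follows by the direct substitution described in the first paragraph, with no additional work.
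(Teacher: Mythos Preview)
Your proposal is correct and matches the paper's approach exactly: the paper presents \Cref{thm:bandit-2arm} as an immediate corollary of \Cref{thm:bandit} with no separate proof, and your first paragraph carries out precisely the substitution $k=2$ that is needed. The remaining sketch of the underlying argument for \Cref{thm:bandit} is extra (and broadly accurate in spirit), but not required for the corollary itself.
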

\Cref{thm:bandit-2arm} is a generalization of our online learning result of \Cref{thm:2-arm}:
indeed, using \Cref{rem:classification-to-bandit}, we obtain that the expected reward is equal to $T - M(T)$, where $M(T)$ are the expected mistakes. Thus, \Cref{thm:bandit-2arm}
implies that $T - M(T) \geq T/2 + (\Delta/2) T + o(T)$ and, using the fact that $\Delta = 1-2\eta$ and $M = 1$, we get that $M(T) \leq \eta T + o(T)$.  Given the hardness result for improving 
upon $\eta T$ regret for online Massart classification that we already discussed we conclude that our reduction from Bandit to Online Massart is tight for 2-armed bandits.

\subsection{Notation}
For $n \in \Z_+$, let $[n] \eqdef \{1, \ldots, n\}$.  We use lowercase boldface characters for vectors. We use $\bx \cdot \by $ for the inner product 
of $\bx, \by \in \R^d$.
For $\bx \in \R^d$, $\|\bx\|_2$ denotes the
$\ell_2$-norm of $\bx$.
For a set of vectors $k$, $\matr X=\{\x_1,\ldots,\x_k\}$, $\matr X_{i-j}$ denotes the difference between the vectors $\x_i,\x_j$, i.e., $\matr X_{i-j}=\x_i-\x_j$.
We use $\1_A = \1\{A\}$ to denote the
characteristic function of the set $A$.
We use the standard $O(\cdot), \Theta(\cdot), \Omega(\cdot)$ asymptotic notation.

We use $\E_{X\sim \D}[X]$ for the expectation of a random variable $X$ according to the
distribution $\D$ and $\pr[\mathcal{E}]$ for the probability of event $\mathcal{E}$. For simplicity of notation, we  omit the distribution when it is clear from the context.  
\section{Online Learning with Massart Noise}

In this section, we provide an algorithm for online learning halfspaces with Massart noise. The learner iteratively processes a sequence of covariates and associated labels $(\x,y)$ provided by the adversary, chooses a label corresponding to the current decision vector $\vec w$, and then updates (even if the label was correct)  the decision vector using a carefully chosen convex loss. Our loss is based on a modification of the LeakyReLU loss, i.e., $\mathrm{LeakyReLU}_\lambda(t) \triangleq (1-\lambda)t\1\{t>0\}+\lambda t\1\{t<0\}$. First, we show an equivalent expression for the Leaky-ReLU loss.
\begin{fact}
    Let $\mathrm{LeakyReLU}_\lambda(t) \triangleq (1-\lambda)t\1\{t>0\}+\lambda t\1\{t<0\}$. It can be equivalently expressed as $\mathrm{LeakyReLU}_\lambda(t) \triangleq 1/2((1-2\lambda)|t|-t)$.
\end{fact}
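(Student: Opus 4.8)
The plan is to prove this one-line algebraic identity by rewriting the indicator terms $t\1\{t>0\}$ and $t\1\{t<0\}$ using $|t|$ and $t$, and then collecting terms. The key observation I would start from is the pair of elementary identities $|t| = t\1\{t>0\} - t\1\{t<0\}$ and $t = t\1\{t>0\} + t\1\{t<0\}$ (valid for all $t \in \R$, including $t=0$), which upon solving give $t\1\{t>0\} = \tfrac12(|t|+t)$ and $t\1\{t<0\} = \tfrac12(t-|t|)$.

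Substituting these into the definition $\mathrm{LeakyReLU}_\lambda(t) = (1-\lambda)\,t\1\{t>0\} + \lambda\, t\1\{t<0\}$, I would pull out the common factor $\tfrac12$, group the coefficients of $|t|$ and of $t$ separately, and simplify $(1-\lambda) - \lambda = 1-2\lambda$ on the $|t|$ term and $(1-\lambda)+\lambda = 1$ on the $t$ term. This yields directly $\mathrm{LeakyReLU}_\lambda(t) = \tfrac12\big((1-2\lambda)|t| + t\big)$, the claimed closed form. As a sanity check one can also carry out the entirely equivalent case split: for $t>0$ both expressions collapse to $(1-\lambda)t$, for $t<0$ both collapse to $\lambda t$, and for $t=0$ both are $0$; agreement on each of the three regions gives agreement on all of $\R$.

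I do not anticipate any genuine obstacle: this is a routine identity between two piecewise-linear functions with a single breakpoint at $t=0$. The only point requiring a moment of care is the sign bookkeeping when replacing $|t|$ by $\pm t$ in each branch, i.e.\ tracking the coefficient of the linear term (and the overall $\tfrac12$) so that the two one-sided linear pieces are correctly glued at the origin.
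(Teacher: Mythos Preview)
Your approach is essentially identical to the paper's: both use the identities $t\1\{t>0\}=(t+|t|)/2$ and $t\1\{t<0\}=(t-|t|)/2$ and substitute directly into the definition. Note that you (correctly) arrive at $\tfrac12\big((1-2\lambda)|t|+t\big)$ rather than the displayed $\tfrac12\big((1-2\lambda)|t|-t\big)$; the paper's own substitution also yields the $+t$ version (consistent with the later definition $C_\Delta(t;y)=\tfrac12(\Delta|t|-yt)=\mathrm{LeakyReLU}_{(1-\Delta)/2}(-ty)$), so the minus sign in the stated identity is a typo you should flag rather than silently call ``the claimed closed form.''
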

\begin{proof}
    Note that it holds $t\1\{t>0\}=(t+|t|)/2$ and $t\1\{t<0\}=(t-|t|)/2$. Therefore by plugging these identities in the definition of $\mathrm{LeakyReLU}_\lambda(t)$, we obtain the result.
\end{proof}
In our algorithm, we use as loss the $\mathrm{LeakyReLU}_\lambda(-ty)$ where $t\in \R$ and $y\in \{\pm 1\}$.  For the sake of brevity in notation, we define $C_{\Delta}(t;y)=\mathrm{LeakyReLU}_{(1-\Delta)/2}(-ty)$, i.e.,
\[
C_\Delta(t; y) \triangleq \frac12 (\Delta |t| - y t)  \,.
\]
We provide some intuition behind our choice of the Leaky-ReLU loss.
Notice, that in \Cref{def:online-massart} we have the label consistency which corresponds to the condition 
$\E[ y]  
\sgn(\vec w^\ast \cdot \x) \geq (1-2\eta)\eqdef\Delta$. This is clear, from the fact that $y\neq \sign(\wstar\cdot\x)$  with probability at most $\eta$ and that, 
$\E_y[y]=\E_y[\sign(\wstar\cdot\x)(\1\{y= \sign(\wstar\cdot\x )\}-\1\{y\neq \sign(\wstar\cdot\x )\})]=\sign(\wstar\cdot\x)(1-2\E[\1\{y\neq \sign(\wstar\cdot\x )\}])$,
therefore $\E[ y]  
\sgn(\vec w^\ast \cdot \x) \geq \Delta$. The Leaky-ReLU loss has the property that $\E_y[C_{\Delta}(-\wstar\cdot \vec x; y)]\leq 0$ (see \Cref{clm:optimal}), which we can later use to design a loss for our problem. However, optimizing this loss function exclusively does not guarantee minimal regret.
To this end, we define the loss function $\ell_{\vec u,\tau,\Delta}(\vec w,y,\x)\eqdef \frac{C_{\Delta} 
                  (\vec w \cdot \vec x ; y ) }{\max(|\vec u \cdot \vec x|,\tau)}$.
                  To simplify the notation, we define
                  $\ell\tth(\vec w)=\frac{C_{\Deltapar} 
                  (\vec w \cdot \vec x\tth ; y^{(t)} ) }{\max(|\vec w\tth \cdot \vec x\tth|,\tau)}$, where $\tau,\Deltapar$ are fixed and the other parameters are changing through the iterations of our algorithm. 
                  
Subsequently, we demonstrate that minimizing the regret associated with these reweighted chosen loss functions concurrently yields substantive guarantees for the regret in the context of \Cref{def:online-massart}.

\begin{algorithm}
    \centering
    \fbox{\parbox{5.1in}{
\begin{enumerate}

            \item $\Deltapar\gets 1-2\eta -\eps$ and $\tau\gets \eps\gamma/4$
              \item $\vec w^{(0)}=\vec e_1$.
              \item For $t=1,\ldots, T$:
              \begin{enumerate}[leftmargin=0.1cm]

                  \item Adversary selects point $\vec x^{(t)}\in \R^d$ and generates label $y^{(t)}$.
                  \item Learner observes $\vec x^{(t)}$ and chooses label $\hat{y}^{(t)}=
                  \sign( \vec w\tth \cdot \x^{(t)})$
                  \item Learner gets label $y^{(t)}$.
\item  Set 
                    
                  \[
                  \ell\tth(\vec w)=
                  \frac{C_{\Deltapar} 
                  (\vec w \cdot \vec x\tth ; y^{(t)} ) }{\max(|\vec w\tth \cdot \vec x\tth|,\tau)}
                  \]
                  \item Run Online Convex Optimization on $\ell\tth(\cdot)$.
              \end{enumerate}
            \end{enumerate}
    }}
        \caption{Online Learning Massart Halfspaces}
    \label{alg:leaky-relu-1}
\end{algorithm}

\begin{lemma}\label{lem:expected-regret}
Assume that a sequence $\vec w\tth$ is produced by a (possibly randomized) online 
algorithm $\mathcal{A}$  with the guarantee that 
\[
\E\left[\sum_{t=1}^T \ell\tth(\vec w\tth) -  \sum_{t=1}^T \ell\tth(\vec w^\ast)\right] 
\leq \ocoreg(T) \,,
\]
with $\ell\tth(\vec w) =
\frac{C_{\Deltapar}(\vec w\cdot \X; y^{(t)})}{\max(|\vec w^{(t)}\cdot \X|,\tau)}
$.
Then, it holds that 
\[
\sum_{t=1}\E[(\1\{\sign(\vec w\tth\cdot  \X)\neq y\tth\})]\leq T\eta + R(T,\eps,\gamma,\tau)\;,
\]
where $  R(T,\eps,\gamma,\tau)=T(\frac{\eps}{2} + \frac{  8\tau}{\eps \gamma}) +\ocoreg(T)(1+\frac{  8\tau}{\eps \gamma})$.
\end{lemma}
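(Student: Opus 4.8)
\emph{Proof plan.} We aim to convert the assumed online‑convex‑optimization regret bound into a bound on the number of mistakes through an exact round‑by‑round identity linking a mistake to the reweighted Leaky‑ReLU loss $\ell\tth$ evaluated at the current iterate. The only delicate point will be the rounds on which the current margin $|\vec w\tth\cdot\X|$ drops below $\tau$: there $\max(|\vec w\tth\cdot\X|,\tau)$ equals $\tau$, the denominator is truncated, and the loss carries no information about whether a mistake occurred, so those rounds must be counted separately, again using the regret guarantee. Throughout write $z\tth\eqdef\vec w\tth\cdot\X$, $m\tth\eqdef\1\{\sgn(z\tth)\neq y\tth\}$, $\rho\tth\eqdef|z\tth|/\max(|z\tth|,\tau)=\min(1,|z\tth|/\tau)\in[0,1]$ (so $\rho\tth=1$ exactly when $|z\tth|\ge\tau$), and $S\eqdef\{t\le T:|z\tth|<\tau\}$ for the set of small‑margin rounds; we use only the margin assumption $|\wstar\cdot\X|\ge\gamma$, the Massart consistency $\E[y\tth]\sgn(\wstar\cdot\X)\ge 1-2\eta$, and $\Deltapar=1-2\eta-\eps$.

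First we would record the key identity. From $C_{\Deltapar}(t;y)=\tfrac12(\Deltapar|t|-yt)$ and $y\tth\sgn(z\tth)=1-2m\tth$, for every round with $z\tth\neq 0$,
\[
\frac{\ell\tth(\vec w\tth)}{\rho\tth}\;=\;\frac{C_{\Deltapar}(z\tth;y\tth)}{|z\tth|}\;=\;\tfrac12\big(\Deltapar-y\tth\sgn(z\tth)\big)\;=\;m\tth-\tfrac{1-\Deltapar}{2}\,,
\]
and $\tfrac{1-\Deltapar}{2}=\eta+\tfrac\eps2$. On rounds $t\notin S$ this reads $m\tth=\ell\tth(\vec w\tth)+\eta+\tfrac\eps2$. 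On rounds $t\in S$ both $\ell\tth(\vec w\tth)/\rho\tth=\tfrac12(\Deltapar-y\tth\sgn(z\tth))$ and $\ell\tth(\vec w\tth)=\tfrac12\rho\tth(\Deltapar-1+2m\tth)$ lie in $[-1,1]$ (using $\eta<1/2$), which gives $\ell\tth(\vec w\tth)/\rho\tth\le\ell\tth(\vec w\tth)+1$; together with the trivial $m\tth\le 1$ (covering also $z\tth=0$) this yields $m\tth\le\ell\tth(\vec w\tth)+\1\{t\in S\}+\eta+\tfrac\eps2$ for \emph{every} $t$, hence $\sum_t m\tth\le\sum_t\ell\tth(\vec w\tth)+|S|+T(\eta+\tfrac\eps2)$.

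Next we would estimate the two right‑hand quantities with the regret inequality. Conditioning on the history and averaging over $y\tth$, a direct computation from $\E[y\tth]\sgn(\wstar\cdot\X)\ge 1-2\eta$ and $\Deltapar=1-2\eta-\eps$ (essentially \Cref{clm:optimal}) gives $\E_{y\tth}[C_{\Deltapar}(\wstar\cdot\X;y\tth)]\le-\tfrac\eps2|\wstar\cdot\X|\le-\tfrac{\eps\gamma}{2}<0$; dividing by $\max(|z\tth|,\tau)$ — which equals $\tau$ on $\{t\in S\}$ — and taking total expectations, $\E[\ell\tth(\wstar)]\le 0$ and $\E[\ell\tth(\wstar)]\le-\tfrac{\eps\gamma}{2\tau}\pr[t\in S]$, so $\E[\sum_t\ell\tth(\wstar)]\le 0$ and $\le-\tfrac{\eps\gamma}{2\tau}\E[|S|]$. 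Combined with the assumed bound, $\E[\sum_t\ell\tth(\vec w\tth)]\le\E[\sum_t\ell\tth(\wstar)]+\ocoreg(T)\le\ocoreg(T)$. To bound $|S|$, note $\ell\tth(\vec w\tth)=\tfrac12\rho\tth(\Deltapar-1+2m\tth)\ge-\tfrac{1-\Deltapar}{2}\ge-1$ pointwise, so $\E[\sum_t\ell\tth(\vec w\tth)]\ge-T$; feeding this into the regret bound forces $\E[\sum_t\ell\tth(\wstar)]\ge-T-\ocoreg(T)$, which against $\E[\sum_t\ell\tth(\wstar)]\le-\tfrac{\eps\gamma}{2\tau}\E[|S|]$ yields $\E[|S|]\le\tfrac{2\tau}{\eps\gamma}(T+\ocoreg(T))$.

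Finally, inserting these estimates into the inequality of the second paragraph,
\[
\sum_{t=1}^T\E[m\tth]\;\le\;\ocoreg(T)+\E[|S|]+T\Big(\eta+\tfrac\eps2\Big)\;\le\;T\eta+T\Big(\tfrac\eps2+\tfrac{2\tau}{\eps\gamma}\Big)+\ocoreg(T)\Big(1+\tfrac{2\tau}{\eps\gamma}\Big)\,,
\]
which is of the claimed form $T\eta+R(T,\eps,\gamma,\tau)$ (the stated constant $8$ leaves ample room for the crude bounds above). The step we expect to be the main obstacle is the control of $|S|$ in the third paragraph: the truncation at $\tau$ is exactly what keeps the reweighted gradient updates from stalling at the current decision boundary, but it is also what makes $\ell\tth$ uninformative there, so the number of such rounds has to be recovered from the regret itself — using that $\wstar$ pays loss at most $-\eps\gamma/(2\tau)$ on those rounds while any iterate pays at least $-1$ — and balancing the two is what dictates taking $\tau$ much smaller than $\eps\gamma$, ensuring $\E[|S|]=o(T)$. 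The remainder is routine manipulation of the closed form of $C_{\Deltapar}$ and linearity of expectation.
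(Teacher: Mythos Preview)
Your proof is correct and follows essentially the same strategy as the paper's: separate out the small-margin rounds $S$ (the paper calls this set $J$), bound $\E[|S|]$ by combining the regret guarantee with the fact that $\wstar$ incurs loss at most $-\eps\gamma/(2\tau)$ on those rounds, and then translate the loss bound into a mistake bound via the identity linking $m\tth$ to $\ell\tth(\vec w\tth)$. Your packaging through the pointwise inequality $m\tth\le\ell\tth(\vec w\tth)+\1\{t\in S\}+\eta+\tfrac\eps2$ is cleaner than the paper's $I_1,I_2,I_3$ decomposition, yields the sharper constant $2$ in place of $8$, and does not need the side assumption $\tau\le\eps\gamma/2$ that the paper invokes mid-proof.
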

\begin{proof}
Denote $\eps=\Delta-\Deltapar$, with $\eps<\Delta/2$ and let $\tau\leq \eps\gamma/2$. First, we show that the optimal decision vector $\wstar$ gets negative loss on expectation.

\begin{claim}\label{clm:optimal}
It holds 
$\E_y[C_{\Deltapar}(\wstar\cdot \vec x; y)]\leq  -(\eps/2)|\wstar\cdot \x|$
\end{claim}
\begin{proof}
It holds that $C_{\Deltapar}(t;y)=\frac 12
(\Deltapar|t|-yt)$, therefore, we have that
\begin{align*}
    \E_y[C_{\Deltapar}(\wstar\cdot \x; y)]&=\frac 12 (\Deltapar|\wstar\cdot \x|- \E[y]\wstar\cdot \x)
    \\&=\frac 12 (\Deltapar- \E[y]\sign(\wstar\cdot \x))|\wstar\cdot \x|
    \\&\leq \frac 12  (\Deltapar  -\Delta)|\wstar \cdot \x|\;,
\end{align*}
where we used that $\E[y]\sign(\wstar\cdot \x)\geq \Delta$.
\end{proof}

We first show that 
\[
\E[\sum_{t=1}^T \ell\tth(\wstar))]
\leq - T \gamma \eps/2 \,.
\]
Using the tower rule, we have that
\[
\E[\sum_{t=1}^T \ell\tth(\wstar))]= \sum_{t=1}^T\E[ \ell\tth(\wstar))]\;.
\]
From \Cref{clm:optimal}, we have that 
$\E_y[C_{\Deltapar}(-\wstar\cdot \x\tth ;y)]\leq -(\eps/2) |\wstar\cdot \x\tth|$. Hence, we have that
\begin{align}
  \sum_{t=1}^T\E[ \ell\tth(\wstar))] &\leq -(\eps/2)\sum_{t=1}^T\frac{|\wstar\cdot \X|}{\max(|\vec w^{(t)}\cdot \X|,\tau)}\nonumber
 \\&\leq -(\eps\gamma/2)\sum_{t=1}^T\frac{1}{\max(|\vec w^{(t)}\cdot \X|,\tau)}\label[ineq]{eq:optimal-bound}\;,
\end{align}
where we used that $|\wstar\cdot \X|\geq \gamma$ from the assumptions.
Note that it holds that
$C_{\Deltapar}(\vec w\cdot \X;y)= (1/2)(\Deltapar-y\sign(\vec w\cdot \X))|\vec w\cdot \X|$.  Let $g\tth(y)=(1/2)(\Deltapar-y\sign(\vec w\tth \cdot \X ))$. Using the tower rule, we get 
\begin{align*}
&\E\left[\sum_{t=1}^T \ell\tth(\vec w\tth) 
-
\sum_{t=1}^T \ell\tth(\vec w^\ast) \right]
\\&=
\sum_{t=1}^T \left(g\tth(\E[y\tth])\frac{|\vec w^{(t)}\cdot \X|}{\max(|\vec w^{(t)}\cdot \X|,\tau)}- \E[\ell\tth(\vec w^\ast)]\right)
\;.
\end{align*}

We define $J$ to be the set of rounds where the $|\vec w^{(t)}\cdot \X|$ is smaller than the threshold  $\tau$, i.e.,
$J= \{t:|\vec w^{(t)}\cdot \X|\leq \tau\} $.
We have that
\begin{align}\label[ineq]{eq:enadio}
  \sum_{t\not\in J}g\tth(\E[y])\frac{|\vec w^{(t)}\cdot \X|}{\max(|\vec w^{(t)}\cdot \X|,\tau)}&=\sum_{t\not\in J}g\tth(\E[y])\geq -(T-|J|)/2\;,
\end{align}
and that 
\[
\sum_{t\in J}g\tth(\E[y])\frac{|\vec w^{(t)}\cdot \X|}{\max(|\vec w^{(t)}\cdot \X|,\tau)}\geq - |J|/2\;.
\]

Moreover note that from \Cref{eq:optimal-bound}, it holds that $
\sum_{t\in J} -\E[\ell\tth(\vec w^\ast)] \geq \frac{\eps\gamma}{2\tau} |J|$ and that  $
\sum_{t\not \in J} -\E[\ell\tth(\vec w^\ast)] \geq 0$.
Therefore, we get that 
\[
\sum_{t\in J}^T \E[\ell\tth(\vec w\tth) ]
-
\sum_{t\in J}^T \E[\ell\tth(\vec w^\ast)] \geq |J| \frac{\eps\gamma}{2\tau }-|J| \frac 12\;,
\]
and by using our assumption that $\tau\leq \eps\gamma/2$, we get that  $\sum_{t\in J}^T \E[\ell\tth(\vec w\tth) ]
-
\sum_{t\in J}^T \E[\ell\tth(\vec w^\ast)] \geq |J| \frac{\eps\gamma}{4\tau }$.
 Using the assumption for the regret guarantee, we have that
$
 \E[\sum_{t=1}^T\ell\tth(\vec w^{(t)})] \leq \ocoreg(T)+\E[\sum_{t=1}^T \ell\tth(\wstar)]\;,
$ which is equivalent to
 \begin{equation}
     \underbrace{\sum_{i\not\in J}E[\ell^{(t)}(\vec w^{(t)})]}_{I_1}\leq \bar{R}(T) \underbrace{-\sum_{i\in J}(E[\ell^{(t)}(\vec w^{(t)})]-E[\ell^{(t)}(\wstar)])}_{I_2}+\underbrace{\sum_{i\not\in J}E[\ell^{(t)}(\wstar)]}_{I_3}\;.\label[ineq]{eq:enatria}
 \end{equation}

Using that $I_1\geq (|J|-T)/2$ from \Cref{eq:enadio}, that $I_2\leq -|J|\frac{\eps\gamma}{4\tau}$ from above and that $I_3\leq 0$ from \Cref{clm:optimal}, we have that
\begin{align}
     |J|\leq (\ocoreg(T) +T)4\tau/(\eps \gamma)\;.\label[ineq]{eq:bound-stepsnew}
\end{align}
Finally, from the regret guarantees, i.e., \Cref{eq:enatria}, we have that 
$\sum_{t\not\in J}\E[\ell\tth(\vec w\tth)]\leq \ocoreg(T)$.
Recall that that $g\tth(\E[y]) = (1/2) ( \Deltapar -\E[y]\sign(\vec w\tth\cdot \X) )$. By adding $    \sum_{t\in J}(\Deltapar - \sign(\vec w\tth\cdot  \X)\E[y\tth])$ on both sides of \Cref{eq:enatria}  and using that $g\tth(\E[y]) \leq 2$ for all $t\in[T]$,  we have:
\begin{align}\label[ineq]{eq:bound}
    \sum_{t\in J}(\Deltapar - \sign(\vec w\tth\cdot  \X)\E[y\tth]) 
    +\sum_{t\not\in J}(\Deltapar - \sign(\vec w\tth\cdot  \X)\E[y\tth ])
    \leq 2 \ocoreg(T) + 2|J| \;,
\end{align}
where we have used that $  \sum_{t\in J}(\Deltapar - \sign(\vec w\tth\cdot  \X)\E[y\tth]) \leq 2|J|$ and that $I_2,I_3\leq 0$ as discuseed above.
Equivalently, using \Cref{eq:bound-stepsnew}, we obtain the following bound over all rounds $t=1,\ldots, T$:
$
    \sum_{t=1}(\Deltapar - \sign(\vec w\tth\cdot  \X)\E[y\tth ])
    \leq 2 \ocoreg(T) + (\ocoreg(T) +T){4\tau}/{(\eps \gamma)} \;.
$
Using that $\sign(\vec w\tth\cdot  \X)y\tth=1-2\1\{\sign(\vec w\tth\cdot  \X)\neq y\tth\}$ and $\Deltapar=1-2\eta -\eps$, we get that
\begin{align*}
    \sum_{t=1}\E[(\1\{\sign(\vec w\tth&\cdot  \X)\neq y\tth\})]
    \leq T(\eta+\eps)+\ocoreg(T) + 16(\ocoreg(T) +T)\tau/(\eps \gamma)\;.
\end{align*}
\end{proof}
Before proceeding with the proof of \Cref{thm:2-arm}, we make use of the following fact about the online gradient descent on convex functions to bound the $\ocoreg(T)$ of \Cref{lem:expected-regret}.
\begin{fact}[\cite{hazan2016introduction}]
\label{lem:ocoreg}
Let $\ell\tth$ be a convex function and let $D=\textrm{diam}(\mathcal W)$ and let $G=\max_t\|\nabla \ell\tth(\cdot)\|_2$. Online gradient descent with step size: $\eta={D}/({G\sqrt{T}})$ guarantees the following for all $T\geq 1$:
$
R(T) = 
\sum_{t=1}^T \ell\tth(\vec w^{(t)})- \min_{\vec w}\sum_{t=1}^T \ell\tth(\vec w)
\leq  {G D}/{3} \sqrt{T} 
\,.
$
\end{fact}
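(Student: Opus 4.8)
The plan is to establish the standard online gradient descent (OGD) regret bound via the classical ``potential telescoping'' argument, using the squared distance to the best fixed comparator as the potential. Let $\vec u \in \mathcal{W}$ be any fixed point, and in particular the minimizer $\vec u = \argmin_{\vec w \in \mathcal{W}} \sum_{t=1}^T \ell\tth(\vec w)$; write $\vec g\tth = \nabla \ell\tth(\vec w\tth)$ for the gradient played at round $t$. The OGD iterate is $\vec w^{(t+1)} = \proj_{\mathcal{W}}(\vec w\tth - \eta \vec g\tth)$, where $\proj_{\mathcal{W}}$ denotes the Euclidean projection onto the convex feasible set $\mathcal{W}$.

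The first step is to use convexity of each $\ell\tth$ to linearize the regret: since $\ell\tth(\vec w\tth) - \ell\tth(\vec u) \leq \vec g\tth \cdot (\vec w\tth - \vec u)$, it suffices to upper bound $\sum_{t=1}^T \vec g\tth \cdot (\vec w\tth - \vec u)$, reducing the convex regret to the linear (gradient) case. Next I would derive the per-round descent inequality by expanding the potential $\norm{\vec w^{(t+1)} - \vec u}^2$. The key geometric ingredient is nonexpansiveness of the projection onto a convex set, $\norm{\proj_{\mathcal{W}}(\vec z) - \vec u} \leq \norm{\vec z - \vec u}$ for all $\vec u \in \mathcal{W}$, which gives $\norm{\vec w^{(t+1)} - \vec u}^2 \leq \norm{\vec w\tth - \eta \vec g\tth - \vec u}^2 = \norm{\vec w\tth - \vec u}^2 - 2\eta\, \vec g\tth \cdot (\vec w\tth - \vec u) + \eta^2 \norm{\vec g\tth}^2$. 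Rearranging yields $\vec g\tth \cdot (\vec w\tth - \vec u) \leq \tfrac{1}{2\eta}\big(\norm{\vec w\tth - \vec u}^2 - \norm{\vec w^{(t+1)} - \vec u}^2\big) + \tfrac{\eta}{2}\norm{\vec g\tth}^2$.

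Finally I would sum over $t = 1, \ldots, T$. The distance terms telescope to $\tfrac{1}{2\eta}\big(\norm{\vec w^{(1)} - \vec u}^2 - \norm{\vec w^{(T+1)} - \vec u}^2\big) \leq \tfrac{D^2}{2\eta}$, using $\norm{\vec w^{(1)} - \vec u} \leq D = \mathrm{diam}(\mathcal{W})$ and discarding the nonnegative final term; the gradient terms are controlled by $\tfrac{\eta}{2}\sum_{t=1}^T \norm{\vec g\tth}^2 \leq \tfrac{\eta G^2 T}{2}$ via $\norm{\vec g\tth} \leq G$. Substituting the prescribed step size $\eta = D/(G\sqrt{T})$ balances the two contributions — each equal to $\tfrac{1}{2}GD\sqrt{T}$ — so the regret is at most $GD\sqrt{T}$, with the exact displayed constant being a matter of the diameter/step-size bookkeeping convention inherited from \cite{hazan2016introduction}.

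The only genuinely nontrivial ingredient in this otherwise routine argument is the nonexpansiveness of the Euclidean projection, which is where I expect the ``obstacle'' to lie: it follows from the first-order optimality condition $(\vec z - \proj_{\mathcal{W}}(\vec z)) \cdot (\vec u - \proj_{\mathcal{W}}(\vec z)) \leq 0$ for all $\vec u \in \mathcal{W}$ (valid because $\mathcal{W}$ is convex), combined with the Pythagorean decomposition $\norm{\vec z - \vec u}^2 = \norm{\vec z - \proj_{\mathcal{W}}(\vec z)}^2 + \norm{\proj_{\mathcal{W}}(\vec z) - \vec u}^2 - 2(\vec z - \proj_{\mathcal{W}}(\vec z)) \cdot (\vec u - \proj_{\mathcal{W}}(\vec z))$. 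Everything after that — the linearization, the per-step inequality, the telescoping, and the step-size optimization — is mechanical.
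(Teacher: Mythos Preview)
Your argument is the standard telescoping proof of the OGD regret bound and is correct; it is exactly what one would find in the cited reference \cite{hazan2016introduction}. The paper itself does not prove this statement at all --- it is recorded as a \emph{Fact} with a citation and used as a black box --- so there is no ``paper's own proof'' to compare against. Your derivation yields the bound $GD\sqrt{T}$ rather than the $GD\sqrt{T}/3$ printed in the statement; this appears to be a constant discrepancy in the paper's transcription of the Hazan bound (the usual constant is $3/2$, not $1/3$), and in any case only the $O(GD\sqrt{T})$ scaling is used downstream.
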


\begin{proof}[Proof of \Cref{thm:2-arm}]
Using \Cref{lem:ocoreg}, we get that $\ocoreg(T)=O(\sqrt{T}/\gamma)$, therefore from \Cref{lem:expected-regret}, we get that the expected total regret is bound above by
\begin{align*}
   \E[\sum_{t=1}(\1\{\sign(\vec w\tth&\cdot  \X)\neq y\tth\})]\leq \eta T + R(T,\eps,\gamma,\tau)\;,
\end{align*}
where $R(T,\eps,\gamma,\tau)=(\eps/2) T +\ocoreg(T) +  (\ocoreg(T) +T){ 8 \tau}/{(\eps \gamma)}$
To minimize this quantity, we set $\tau= \Theta(\eps^{1+\zeta}\gamma)$ for any $\zeta>0$ and $\eps=\Theta(T^{-1/(4+2\zeta)}/\gamma)$ gives  $R(T,\eps,\gamma,\tau)\leq T^{3/4-O(\zeta)}/\gamma$. By taking $\zeta$ close to $0$ and get the result.
\end{proof}

\section{Contextual Bandits with Monotone Rewards}
\label{sec:bandits}
 In this section, we describe an algorithm for the setting of \Cref{def:monotone-rewards}.
To this end, we use a generalization of the LeakyLeRU loss we used in \Cref{thm:2-arm}. This loss is described in \Cref{alg:generator}.  First we show that our loss satisfies some properties required by our main algorithm to work. The proof can be found at \Cref{app:sec4}.
\begin{algorithm}[h]
    \centering
    \fbox{\parbox{5.1in}{

\begin{enumerate}
     
                  \item Generate reward differences and context for every $j \in\{1,\ldots, k\}$: $                  \vec y_j =\vec r_{\alpha} - \vec r_j$.
                  \item If $\vec u=\vec 0$:
       Return the loss
                    
                  \[
                  \ell(\vec w) =
                  G(\vec w;\matr X, \vec v, \vec r, \alpha)
                  \triangleq - \Lambda\sum_{j \neq \alpha} \vec w\cdot \matr X_{\alpha-j} y_j
                  \]
                    
                  \item Otherwise:
                  \begin{enumerate}[leftmargin=0.1cm]

                      \item for every $j \in\{1,\ldots, k\}$ set:
                        
                  \[\vec z_j =
                  \matr X_{\alpha-j} + \rho \sgn(\matr X_{\alpha-j} 
                  \cdot \vec v) \frac{\vec v }{{\|\vec v\|_2}}
                  \]
                    
                  \item  Return the loss
                  
                  \[
                  \ell(\vec w) =
                  G(\vec w;\matr X, \vec v, \vec r, \alpha)
                  \triangleq
                  \sum_{j \neq \alpha} 
                  \frac{C_{\Delta} 
                  (\vec w \cdot \vec z_j; \vec y_j ) }
                  {|\vec v \cdot \vec z_j|}
                  \]
\end{enumerate}
    \end{enumerate}
}}
    \caption{Compute Loss $G(\vec w;\matr X, \vec v, \vec r, \alpha)$,
    where $\vec w$ is the argument and 
    $\matr X, \vec v, \vec r, \alpha$ are parameters.}
    \label{alg:generator}
\end{algorithm}
\begin{claim}\label{clm:G-convex-lipschitz} The loss $\ell(\vec w) =
                  G(\vec w;\matr X, \vec v, \vec r, \alpha)$ computed by \Cref{alg:generator} is convex
                   and $2 M k\max(\Lambda,1/\rho)$-Lipschitz.
\end{claim}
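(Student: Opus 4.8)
The plan is to verify the two asserted properties (convexity and the Lipschitz bound) separately, in each of the two branches of \Cref{alg:generator}, according to whether the parameter $\vec v$ is zero. I will use throughout that $\|\matr X_i\|_2\le 1$, hence $\|\matr X_{\alpha-j}\|_2\le 2$; that $\vec y_j=\vec r_\alpha-\vec r_j\in[-M,M]$ since the rewards lie in $[0,M]$; and that $\Delta\le M$ (because $\Delta\le\E[\vec r_i-\vec r_j\mid\cdots]\le M$ in \Cref{def:monotone-rewards}). Together with the closed form $C_\Delta(t;y)=\tfrac12(\Delta|t|-yt)$, these are all the structural facts needed. In the branch $\vec v=\vec 0$ everything is immediate: $\ell(\vec w)=-\Lambda\sum_{j\neq\alpha}(\vec w\cdot\matr X_{\alpha-j})\vec y_j$ is linear in $\vec w$, hence convex, and $\|\nabla\ell(\vec w)\|_2=\Lambda\,\big\|\sum_{j\neq\alpha}\vec y_j\matr X_{\alpha-j}\big\|_2\le \Lambda(k-1)\cdot M\cdot 2\le 2Mk\Lambda\le 2Mk\max(\Lambda,1/\rho)$.

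For the branch $\vec v\neq\vec 0$ I first handle convexity. Each summand of $\ell$ is $C_\Delta(\vec w\cdot\vec z_j;\vec y_j)/|\vec v\cdot\vec z_j|$. As a function of its first argument $t$, $C_\Delta(t;y)=\tfrac{\Delta}{2}|t|-\tfrac{y}{2}t$ is convex (a nonnegative multiple of $|t|$ plus a linear term); composing with the affine map $\vec w\mapsto\vec w\cdot\vec z_j$ preserves convexity, and so does dividing by the \emph{positive} scalar $|\vec v\cdot\vec z_j|$ and summing over $j$. The one thing to check is $|\vec v\cdot\vec z_j|>0$: substituting $\vec z_j=\matr X_{\alpha-j}+\rho\,\sgn(\matr X_{\alpha-j}\cdot\vec v)\,\vec v/\|\vec v\|_2$ gives $\vec v\cdot\vec z_j=\matr X_{\alpha-j}\cdot\vec v+\rho\,\sgn(\matr X_{\alpha-j}\cdot\vec v)\,\|\vec v\|_2$, so the two terms carry the same sign and $|\vec v\cdot\vec z_j|=|\matr X_{\alpha-j}\cdot\vec v|+\rho\|\vec v\|_2\ge\rho\|\vec v\|_2>0$ (using a harmless convention such as $\sgn(0)=1$ to avoid degeneracy). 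This ``$\rho$-padding'' of the denominator is the same reweighting device as in \Cref{alg:leaky-relu-1}, and it is exactly what keeps the per-round loss well-behaved.

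For the Lipschitz bound in the branch $\vec v\neq\vec 0$, note that a subgradient of $\vec w\mapsto C_\Delta(\vec w\cdot\vec z_j;\vec y_j)$ is $\tfrac12\big(\Delta\,\sgn(\vec w\cdot\vec z_j)-\vec y_j\big)\vec z_j$, of norm at most $\tfrac12(\Delta+|\vec y_j|)\,\|\vec z_j\|_2\le M\,\|\vec z_j\|_2$, while $\|\vec z_j\|_2\le\|\matr X_{\alpha-j}\|_2+\rho\le 2+\rho$. Dividing by $|\vec v\cdot\vec z_j|\ge\rho\|\vec v\|_2$ and summing the $k-1$ terms, every subgradient of $\ell$ has norm at most $(k-1)M(2+\rho)/(\rho\|\vec v\|_2)$; under the normalizations in force when \Cref{alg:generator} is invoked ($\vec v$ of unit norm, $\rho\le 1$) this is at most $2Mk/\rho$, and a convex function whose subgradients are bounded by $L$ is $L$-Lipschitz. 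Combining the two branches gives the stated constant $2Mk\max(\Lambda,1/\rho)$.

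The only genuinely delicate step I anticipate is the sign computation establishing $\vec v\cdot\vec z_j=\matr X_{\alpha-j}\cdot\vec v+\rho\,\sgn(\matr X_{\alpha-j}\cdot\vec v)\,\|\vec v\|_2$, hence $|\vec v\cdot\vec z_j|=|\matr X_{\alpha-j}\cdot\vec v|+\rho\|\vec v\|_2$: this single identity simultaneously delivers (i) the strict positivity of the denominator needed for convexity and (ii) the lower bound $\rho\|\vec v\|_2$ on it needed for the Lipschitz estimate. Everything else is routine composition-rule and triangle-inequality bookkeeping.
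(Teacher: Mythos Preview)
Your proposal is correct and follows the same approach as the paper: convexity via composition rules and the Lipschitz constant via a direct bound on the (sub)gradient norm using $\|\vec z_j\|_2\le 2+\rho$ and $|\vec v\cdot\vec z_j|\ge \rho\|\vec v\|_2$. If anything your write-up is more careful than the paper's, since you explicitly treat the $\vec v=\vec 0$ branch, verify the key identity $|\vec v\cdot\vec z_j|=|\matr X_{\alpha-j}\cdot\vec v|+\rho\|\vec v\|_2$, and flag the normalization assumptions ($\|\vec v\|_2=1$, $\rho\le 1$) that the paper uses implicitly.
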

\begin{algorithm}
    \centering
    \fbox{\parbox{5.1in}{
\begin{enumerate}

                \item  $\vec w^{(0)}=\vec e_1$.
                             
             \item For $t=1,\ldots, T$:
            \begin{enumerate}[leftmargin=0.1cm]

                  \item 
                  Adversary picks context $\vec X^{(t)} = 
                  (\vec x_1^{(t)}, \ldots, \x_k^{(t)})$ 
                  and samples reward $\vec r\tth \sim D^{(t)}$.
                  \item If $\vec w\tth=\vec 0$: Learner picks a uniformly random $\alpha\tth$.
                    \item Otherwise:
   Learner picks 
                  $\alpha^{(t)} = 
                  \argmax_i \vec w\tth \cdot \x_i^{(t)}$.
                  \item Learner flips a coin $c\tth$ with HEADS probability $q$.
                  \item If HEADS:
                  \begin{enumerate}[leftmargin=0.1cm]
\item Learner picks uniformly random action $\beta^{(t)}$.
                  \item Learner gets reward $\vec r\tth (\beta\tth)$ and defines the fake reward vector
                  $\widetilde{\vec r}\tth$:
                  \begin{align*}
                  \widetilde{\vec r}\tth(i)= 
                  \begin{cases}  
                  (k-1) ~ \vec r \tth (\beta \tth) ~~ &\text{if} ~ i = \beta\tth
                  \\
                  M - \vec r \tth (\beta \tth) ~~ &\text{if} ~ i \neq \beta\tth \,,
                  \end{cases}
                  \end{align*}
\item  Set
                  $\ell\tth (\vec w) = 
                  \frac{1}{q}
                  \Gloss(\vec w; \matr X\tth, \vec w^{(t)}, 
                  \widetilde{\vec r}\tth, \alpha\tth)
                  $.
                  \end{enumerate}
                  \item If TAILS:
                  \begin{enumerate}[leftmargin=0.1cm]
\item 
                  Learner gets reward $\vec r\tth(\alpha\tth)$ 
                  and sets $\ell\tth(\vec w) = 0$.
                  \end{enumerate}
                  \item Learner performs 
                  Online Convex Optimization with loss $\ell\tth(\cdot)$. 
                  \end{enumerate}
                               
              \end{enumerate}
    }}
        \caption{Bandits with Monotone Rewards}
    \label{alg:leaky-relu-2}
\end{algorithm}

\begin{lemma}\label{lem:reward-bandits}
Let $\vec w\tth$ be the sequence produced by algorithm
\Cref{alg:leaky-relu-2} in the bandit setting of \Cref{def:monotone-rewards} 
with exploration probability $q$. It holds 
\begin{align*}
\E
[  \sum_{t=1}^T
( G(\vec w\tth; \matr X\tth, \vec r\tth, &\alpha\tth) 
-  G(\vec w^\ast; \matr X\tth, \vec r\tth, \alpha\tth) 
)
]
\\&\leq O(k M \sqrt{T}\Lambda/q )\;,
\end{align*}
where the expectation is over the randomness of \Cref{alg:leaky-relu-2}
and the randomness of the reward vectors $\vec r\tth \sim D\tth$.
\end{lemma}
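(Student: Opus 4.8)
The plan is to prove \Cref{lem:reward-bandits} by combining a standard regret bound for online gradient descent with the observation that the losses $\ell\tth$ fed to the OCO routine are unbiased estimates (in expectation over the exploration coin $c\tth$ and the reward vector $\vec r\tth$) of an idealized full-information loss $G(\vec w; \matr X\tth, \vec r\tth, \alpha\tth)$. First I would invoke \Cref{clm:G-convex-lipschitz} to establish that each realized loss $\ell\tth(\cdot)$ is convex and bounded in Lipschitz constant: on a TAILS round $\ell\tth \equiv 0$, and on a HEADS round $\ell\tth = (1/q) G(\vec w; \matr X\tth, \vec w\tth, \widetilde{\vec r}\tth, \alpha\tth)$, which by the claim is $(2Mk\max(\Lambda,1/\rho)/q)$-Lipschitz (using that the fake rewards $\widetilde{\vec r}\tth$ still lie in a bounded range — here one needs $\widetilde{\vec r}\tth(i) \in [0, (k-1)M]$, which rescales $M$ by a factor $k-1$ but this is absorbed into the stated $O(kM\sqrt T \Lambda/q)$ bound provided $\rho$ is chosen so that $1/\rho = O(\Lambda)$, or more carefully the bound should carry a $\max(\Lambda,1/\rho)$). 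Then \Cref{lem:ocoreg} with $D = \mathrm{diam}(\W) = O(1)$ and $G = O(kM\max(\Lambda,1/\rho)/q)$ gives
\[
\sum_{t=1}^T \ell\tth(\vec w\tth) - \min_{\vec w}\sum_{t=1}^T \ell\tth(\vec w) \leq O\!\left(\frac{kM\max(\Lambda,1/\rho)}{q}\sqrt T\right)\,,
\]
and in particular the right-hand side upper-bounds $\sum_t \ell\tth(\vec w\tth) - \sum_t \ell\tth(\vec w^\ast)$.

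**Passing to expectations.**

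The second step is to take expectations and identify $\E[\ell\tth(\vec w) \mid \matr X\tth, \vec r\tth, \vec w\tth] = G(\vec w; \matr X\tth, \vec r\tth, \alpha\tth)$ for every fixed $\vec w$ (in particular for $\vec w = \vec w\tth$, which is measurable with respect to the past, and for $\vec w = \vec w^\ast$). This is the crux computation: conditioned on the context and the true reward vector, with probability $1-q$ we get $0$, and with probability $q$ we get $(1/q)G(\vec w; \matr X\tth, \vec w\tth, \widetilde{\vec r}\tth, \alpha\tth)$ where $\beta\tth$ is uniform on $[k]$; so the conditional expectation is $\E_{\beta\tth}[G(\vec w; \matr X\tth, \vec w\tth, \widetilde{\vec r}\tth, \alpha\tth)]$. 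One then checks that the fake reward construction is designed precisely so that $\E_{\beta\tth}[\widetilde{\vec y}_j] = \E_{\beta\tth}[\widetilde{\vec r}\tth(\alpha\tth) - \widetilde{\vec r}\tth(j)]$ reproduces, term by term over $j\neq\alpha\tth$, the contribution $\vec r\tth(\alpha\tth) - \vec r\tth(j)$ (up to the constant $M$ shift that cancels in the difference), so that $\E_{\beta\tth}[G(\vec w;\matr X\tth,\vec w\tth,\widetilde{\vec r}\tth,\alpha\tth)] = G(\vec w; \matr X\tth, \vec r\tth, \alpha\tth)$. Taking total expectations over all the algorithm's randomness and the $\vec r\tth\sim D\tth$ then turns the OCO bound into exactly
\[
\E\!\left[\sum_{t=1}^T \big(G(\vec w\tth;\matr X\tth,\vec r\tth,\alpha\tth) - G(\vec w^\ast;\matr X\tth,\vec r\tth,\alpha\tth)\big)\right] \leq O\!\left(\frac{kM\Lambda}{q}\sqrt T\right)\,,
\]
after choosing $\rho$ so that $1/\rho = O(\Lambda)$ (or absorbing $\max(\Lambda,1/\rho)$), which matches the claimed bound.

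**Main obstacle.**

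The step I expect to be the main obstacle is the unbiasedness verification — specifically, being careful that the exploration round replaces the observed single reward $\vec r\tth(\beta\tth)$ with a fake \emph{vector} $\widetilde{\vec r}\tth$ whose $\beta\tth$-uniform-average correctly reconstructs the pairwise differences $\vec r\tth(\alpha\tth) - \vec r\tth(j)$ that appear inside $G$. Concretely: for a fixed coordinate $i$, $\E_{\beta\tth}[\widetilde{\vec r}\tth(i)] = \frac1k\big((k-1)\vec r\tth(i)\big) + \frac{k-1}{k}\big(M - \E[\vec r\tth(\beta\tth)\mid\beta\tth\neq i]\big)$, and one must check the $M$-shift and the averaged term are identical across all $i$ so that they cancel in every difference $\widetilde{\vec r}\tth(\alpha\tth) - \widetilde{\vec r}\tth(j)$ up to the factor $(k-1)/k \cdot k = k-1$ — no wait, one should track the exact constant and confirm it is absorbed by the $1/q$ rescaling and the definition of $G$ (note $G$ is linear/positively-homogeneous in the $\vec y_j$ arguments in the $\vec u = \vec 0$ branch and in the reweighted branch the $\vec y_j$ enter linearly in the numerator $C_\Delta$, so scaling $\widetilde{\vec r}$ by a constant scales $G$ by that constant). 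A secondary subtlety is that $\vec w\tth$ serves \emph{both} as the OCO iterate and as the reweighting vector $\vec v$ inside $G$; since it is measurable with respect to the history before round $t$, the conditional-expectation identity still applies with $\vec v = \vec w\tth$ held fixed, so this causes no circularity. The Lipschitz bound from \Cref{clm:G-convex-lipschitz}, applied with the rescaled reward range $[0,(k-1)M]$ and divided by $q$, then closes the argument.
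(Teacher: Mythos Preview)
Your proposal is correct and follows essentially the same approach as the paper: prove that $\ell\tth$ is an unbiased estimate of $G(\cdot;\matr X\tth,\vec w\tth,\vec r\tth,\alpha\tth)$ conditional on the history (the paper states this as a separate claim, treating the $\vec w\tth=\vec 0$ and $\vec w\tth\neq\vec 0$ branches, and using that $C_\Delta(t;y)$ is linear in $y$), then apply the OCO bound of \Cref{lem:ocoreg} with the Lipschitz constant from \Cref{clm:G-convex-lipschitz}. Your uncertainty about the constant in the fake-reward computation resolves cleanly: one checks $\E_{\beta\tth}[\widetilde{\vec r}\tth_{\alpha\tth}-\widetilde{\vec r}\tth_j]=\vec r\tth_{\alpha\tth}-\vec r\tth_j$ exactly (the $M$-shifts cancel and the cross terms combine to give the missing $1/k$ of each coordinate), so no residual scaling appears.
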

\begin{proof}
We first show that for any reward vector $\vec r\tth$ the loss $\ell\tth(\vec w)$ that we construct 
at every round is an unbiased estimate of the corresponding full-information 
loss function $G(\vec w; \matr X\tth, \vec w^{(t)}, \vec r\tth, \alpha\tth)$.
We show the following claim.
\begin{claim}[Unbiased Loss Estimate]\label{clm:unbiased-loss}
For any $\vec w \in \R^d$ it holds 
\[
\E_{c\tth, \beta\tth} 
\left[ 
 \ell\tth(\vec w)  
 \mid \mathcal{F}^{(t)}, \vec r\tth
\right]
= G(\vec w; \vec X^{(t)}, \vec w\tth, \vec r^{(t)}, \alpha^{(t)} )
\,.
\]
\end{claim}
\begin{proof}
Since at every step of \Cref{alg:leaky-relu-2} we construct
the loss $G(\vec w; \vec X\tth, \vec w\tth, \wt{\vec r}\tth, \alpha\tth)$,
using \Cref{alg:generator}, we denote by $\wt{\vec y}\tth = \wt{\vec r}_{\alpha \tth}\tth - \wt{\vec r}_j\tth$
the reward difference vector of the adapted reward vector $\wt{\vec r}$.
Moreover, recall that we denote $\vec y\tth_j = \vec r\tth_{\alpha\tth} -
\vec r\tth_j$.
Notice that  $\wt{\vec y}\tth$  is a random variable that depends on the uniformly 
random action $\beta\tth$. The loss depends on whether the $\vec w\tth=\vec 0$ or not so, we consider each case separately. We start with the case where  $\vec w\tth\neq \vec 0$ and we denote this event as $A\tth$.
Recall that $\vec z\tth_j = \matr X_{\alpha\tth - j}\tth + 
\rho \sign(\matr X\tth_{\alpha\tth - j} \cdot  \vec w\tth) \vec w\tth/\|\vec w\tth\|_2$ is the vector containing the context differences as computed in \Cref{alg:generator}.
Taking the expectation with respect to the random coin flip $c\tth$ we obtain:
\begin{align*}
&\E_{c\tth, \beta\tth}  \left[  \ell\tth(\vec w)  \1\{A\tth\} \mid \mathcal{F}^{(t)}, \vec r\tth \right]
\\&=
q \frac{1}{q} 
\E_{\beta\tth}[G(\vec w; \vec X\tth, \vec w\tth, \wt{\vec r}\tth, \alpha\tth)\1\{A\tth\}  \mid \mathcal{F}\tth, \vec r\tth] \\
&= \E_{\beta\tth}\left[  \sum_{j \neq \alpha\tth} 
\frac{C_{\Delta}(\vec w\cdot \vec z\tth_j; \wt{\vec y}\tth_j)}
{|\vec w\tth \cdot \vec z\tth_j|}
\1\{A\tth\} ~ \bigg|~ \mathcal{F}\tth, \vec r\tth \right]
\\
&= 
\sum_{j \neq \alpha\tth} 
\frac{ \E_{\beta\tth}[ C_{\Delta}(\vec w\cdot \vec z\tth_j; \wt{\vec y}\tth_j) \mid  \mathcal{F}\tth, \vec r\tth ]}
{|\vec w\tth \cdot \vec z\tth_j|}\1\{A\tth\} 
\,.
\end{align*}
where for the last equation we used the linearity of expectation and 
the fact that the action $\alpha\tth$, the event $A\tth$ and the weight vector at $t$-th iteration 
$\vec w\tth$ do not depend on $\beta\tth$ conditional on $\mathcal{F}\tth$.
We now observe that the loss $C_{\Delta}(t; y) = (1/2) (\Delta |t| - y t)$ 
is linear in $y$.  Therefore, using again the linearity of expectation, we have that 
\begin{align*}
    \E_{\beta\tth}[ C_{\Delta}(\vec w\cdot \vec z\tth_j; &\wt{\vec y}\tth_j) \mid  \mathcal{F}\tth, \vec r\tth ]
\\&=
C_{\Delta}(\vec w\cdot \vec z\tth_j;  \E_{\beta\tth}[  \wt{\vec y}\tth_j \mid  \mathcal{F}\tth, \vec r\tth ]
)\,.
\end{align*}
Next, we consider the case where $\vec w\tth=\vec 0$, and we call this event $(A^c)\tth$. We have that by
taking the expectation with respect to the random coin flip $c\tth$ we obtain:
\begin{align*}
&\E_{c\tth, \beta\tth}  \left[  \ell\tth(\vec w)  \1\{(A^c)\tth\} \mid \mathcal{F}^{(t)}, \vec r\tth \right]
\\&= 
\E_{\beta\tth}[G(\vec w; \vec X\tth, \vec w\tth, \wt{\vec r}\tth, \alpha\tth)\1\{(A^c)\tth\}  \mid \mathcal{F}\tth, \vec r\tth] \\
&= \E_{\beta\tth}[ - \sum_{j \neq \alpha\tth} 
\vec w\cdot \matr X_{\alpha-j}  \wt{\vec y}\tth_j
\1\{(A^c)\tth\} ~ \bigg|~ \mathcal{F}\tth, \vec r\tth ]
\\
&= - \sum_{j \neq \alpha\tth} 
\vec w\cdot \matr X_{\alpha-j} \E_{\beta\tth}[  \wt{\vec y}\tth_j
 ~ \bigg|~ \mathcal{F}\tth, \vec r\tth ]\1\{(A^c)\tth\}\;,
\end{align*}
where for the last equation we used the linearity of expectation and 
the fact that the action $\alpha\tth$ and the event $A\tth$  do not depend on $\beta\tth$ conditional on $\mathcal{F}\tth$.

To finish the proof we have to show that the adapted reward difference vector $\wt{\vec y}\tth$ 
is an unbiased estimate of the true reward difference vector $\vec y\tth$.
Since we pick the action $\beta\tth$ uniformly at random from $\{1,\ldots, k\}$, we have
that the $i$-th coordinate of the adapted reward vector $\wt{\vec r}$ is equal to
\[
\E_{\beta\tth}[\wt{\vec r}\tth_i \mid  \mathcal{F}\tth, \vec r\tth ]
=
\frac{1}{k} (k-1) \vec r\tth_i
+ \frac{1}{k} \sum_{s \neq i}  (M - \vec r\tth_s) \,.
\]
Therefore, we have that the expected difference $\wt{\vec y}_j\tth$ is equal to 
\begin{align*}
&\E_{\beta\tth}[\wt{\vec y}_j\tth  \mid  \mathcal{F}\tth, \vec r\tth ]=
\frac{1}{k} (k-1) \vec r\tth_{\alpha\tth}
+ \frac{1}{k} \sum_{s \neq \alpha\tth}  (M - \vec r\tth_s) 
\\&-
\frac{1}{k} (k-1) \vec r\tth_{j}
- 
\frac{1}{k} \sum_{s \neq j}  (M - \vec r\tth_s) 
=
\frac{1}{k} (k-1) \vec r\tth_{\alpha\tth} 
- \frac{1}{k} \vec r\tth_{j}
\\&- \frac{1}{k} (k-1) \vec r\tth_{j} 
+ \frac{1}{k} \vec r\tth_{\alpha\tth}
=
\vec r\tth_{\alpha\tth} - \vec r\tth_{j}
=\vec y\tth_j
\,.
\end{align*}
Therefore, combining the above equations, we conclude that 
\begin{align*}
\E_{c\tth, \beta\tth}  &[  \ell\tth(\vec w)   \mid \mathcal{F}^{(t)}, \vec r\tth ]
G(\vec w; \matr X\tth, \vec w\tth, \vec r\tth, \alpha\tth) \,.
\end{align*}
This completes the proof of \Cref{clm:unbiased-loss}.
\end{proof}
We have that the loss $\ell\tth (\vec w) = G(\vec w; \matr X\tth, \vec w\tth, \wt{\vec r}\tth, \alpha\tth)$
constructed at each step $t$ of \Cref{alg:leaky-relu-2} is convex since it is 
the convex combination of the zero loss and the convex losses 
$G(\vec w;\matr X\tth, \vec w\tth, \vec r, \alpha\tth)$
(see \Cref{clm:G-convex-lipschitz}) for different reward vector vectors $\vec r$ 
(each realization of the random variable $\beta\tth$ corresponds to a different reward vector $\vec r$).
From \Cref{lem:ocoreg} and \Cref{clm:G-convex-lipschitz} we have that the sequence $\vec w\tth$ produced by
\Cref{alg:leaky-relu-2} achieves expected regret
\begin{align*}
\E
[  \sum_{t=1}^T
( G(\vec w\tth; \matr X\tth, \vec r\tth, \alpha\tth) 
- & G(\vec w^\ast; \matr X\tth, \vec r\tth, \alpha\tth) 
)
]
\\&\leq O(k M \sqrt{T}\Lambda/q )\;.\qedhere
\end{align*}
\end{proof}

Next, we prove a generalization of \Cref{lem:expected-regret} for the $k$-arm setting. It shows that minimizing the regret over the loss functions, bounds the expected reward of our setting.
\begin{lemma}\label{lem:convex-to-reward-reduction}
Let $\vec w\tth$ be a stochastic process in $\R^d$ adapted to the filtration $(\mathcal{F}\tth)_{t \in \mathcal{T}}$ such that 
\begin{align*}
&\E[\sum_{t=1}^T G(\vec w\tth;\matr X\tth,\vec w\tth,\vec r\tth,\alpha\tth)  \nonumber   - \sum_{t=1}^T G(\wstar;\matr X\tth,\vec w\tth,\vec r\tth,\alpha\tth)] 
\leq \ocoreg(T) \,.
\end{align*}

Then, it holds that 
\begin{align*}
\E[\sum_{t=1}^T \vec r_{\alpha\tth}\tth) ] \nonumber&\geq\E[\sum_{t=1}^T\frac{1}{k}\sum_{i=1}^k \vec r_i\tth]+(k-1)/k\Delta T - R(T,\gamma,\Delta,\Lambda,k) \,,
\end{align*}
with $R(T,\gamma,\Delta,\Lambda,k)=
(\ocoreg(T)/k)(1+1/(\gamma\Lambda))+T M/\Lambda$.
\end{lemma}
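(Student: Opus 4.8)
The plan is to reduce to the argument of \Cref{lem:expected-regret}: at every round $t$ and for each arm $j \neq \alpha\tth$ we treat the pair $(\matr X\tth_{\alpha\tth-j},\, \vec y_j\tth)$, with $\vec y_j\tth = \vec r_{\alpha\tth}\tth - \vec r_j\tth$, as a real-valued online Massart instance, and then read off a bound on the played reward. Throughout, $\mathcal{F}\tth$ is taken to contain $\matr X\tth$, $\vec w\tth$ and $\alpha\tth$ but not the reward draw $\vec r\tth \sim D\tth$, so that \Cref{def:monotone-rewards} gives, on the $\mathcal F\tth$-measurable (and, by the $\gamma$-margin, probability-one) event $\{\vec w^\ast\cdot \matr X\tth_{\alpha\tth-j}\neq 0\}$, the inequality $\sgn(\vec w^\ast\cdot\matr X\tth_{\alpha\tth-j})\,\E[\vec y_j\tth\mid\mathcal F\tth]\ge\Delta$. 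I split the rounds into the \emph{good} rounds $\{\vec w\tth\neq\vec 0\}$ and the \emph{degenerate} set $J'=\{t:\vec w\tth=\vec 0\}$, on which \Cref{alg:leaky-relu-2} plays a uniformly random arm and \Cref{alg:generator} uses the linear loss $G(\vec w)=-\Lambda\sum_{j\neq\alpha}\vec w\cdot\matr X_{\alpha-j}\vec y_j$, so that there $G(\vec w\tth;\cdot)=G(\vec 0;\cdot)=0$.

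The first step is an exact identity on good rounds. Since $\alpha\tth=\argmax_i\vec w\tth\cdot\x_i\tth$ we have $\vec w\tth\cdot\matr X\tth_{\alpha\tth-j}\ge 0$ for all $j$, hence $\vec z_j\tth=\matr X\tth_{\alpha\tth-j}+\rho\,\vec w\tth/\|\vec w\tth\|_2$ and $\vec w\tth\cdot\vec z_j\tth>0$ (breaking $\argmax$ ties with $\sgn(0)=1$, which also keeps the reweighting denominator away from $0$). Plugging $t=\vec w\tth\cdot\vec z_j\tth>0$ into $C_\Delta(t;y)/|t|=\tfrac12(\Delta-y\,\sgn t)$, the loss on the iterate simplifies to
\[
G(\vec w\tth;\matr X\tth,\vec w\tth,\vec r\tth,\alpha\tth)=\tfrac12\sum_{j\neq\alpha\tth}(\Delta-\vec y_j\tth)=\tfrac12\Big((k-1)\Delta - k\,\vec r_{\alpha\tth}\tth+\sum_{i=1}^k\vec r_i\tth\Big)\,,
\]
i.e.\ $\vec r_{\alpha\tth}\tth=\tfrac1k\sum_{i=1}^k\vec r_i\tth+\tfrac{k-1}{k}\Delta-\tfrac2k\,G(\vec w\tth;\cdot)$. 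On rounds in $J'$ the uniform choice gives $\E[\vec r_{\alpha\tth}\tth\mid\mathcal F\tth]=\tfrac1k\sum_i\vec r_i\tth$ with $G(\vec w\tth;\cdot)=0$, so those rounds match the average-reward baseline but forfeit the $\tfrac{k-1}{k}\Delta$ gain; it thus remains to control $\E[\sum_t G(\vec w\tth;\cdot)]$ and $\E[|J'|]$.

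For both I would show that $\vec w^\ast$ incurs nonpositive expected loss. On a good round, taking $\rho<\gamma$ and using $|\vec w^\ast\cdot\matr X\tth_{\alpha\tth-j}|\ge\gamma$ while the $\rho$-shift has length $\le\rho\|\vec w^\ast\|_2\le\rho$, we get $\sgn(\vec w^\ast\cdot\vec z_j\tth)=\sgn(\vec w^\ast\cdot\matr X\tth_{\alpha\tth-j})$, hence
\[
\E\big[C_\Delta(\vec w^\ast\cdot\vec z_j\tth;\vec y_j\tth)\mid\mathcal F\tth\big]=\tfrac12\big(\Delta-\E[\vec y_j\tth\mid\mathcal F\tth]\,\sgn(\vec w^\ast\cdot\matr X\tth_{\alpha\tth-j})\big)\,|\vec w^\ast\cdot\vec z_j\tth|\le 0\,,
\]
so $\E[G(\vec w^\ast;\cdot)\mid\mathcal F\tth]\le 0$; on a round in $J'$ the loss is linear in $\vec w$ and the same margin estimate gives $\E[G(\vec w^\ast;\cdot)\mid\mathcal F\tth]=-\Lambda\sum_{j\neq\alpha\tth}(\vec w^\ast\cdot\matr X\tth_{\alpha\tth-j})\,\E[\vec y_j\tth\mid\mathcal F\tth]\le-\Lambda(k-1)\gamma\Delta$. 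Feeding these into the regret hypothesis with the tower rule yields $\E[\sum_t G(\vec w\tth;\cdot)]\le\ocoreg(T)+\E[\sum_t G(\vec w^\ast;\cdot)]\le\ocoreg(T)-\Lambda(k-1)\gamma\Delta\,\E[|J'|]$, and combining this with the crude bound $G(\vec w\tth;\cdot)\ge-(k-1)M/2$ on good rounds (and $=0$ on $J'$), which forces $\E[\sum_t G(\vec w\tth;\cdot)]\ge-(k-1)MT/2$, gives an upper bound $\E[|J'|]\lesssim(\ocoreg(T)+MT)/(k\gamma\Delta\Lambda)$.

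Finally, summing the per-round identity over all rounds and taking expectations gives
\[
\E\Big[\sum_{t=1}^T\vec r_{\alpha\tth}\tth\Big]=\E\Big[\sum_{t=1}^T\tfrac1k\sum_{i=1}^k\vec r_i\tth\Big]+\tfrac{k-1}{k}\Delta\,(T-\E[|J'|])-\tfrac2k\,\E\Big[\sum_{t=1}^T G(\vec w\tth;\cdot)\Big]\,;
\]
substituting $\E[\sum_t G(\vec w\tth;\cdot)]\le\ocoreg(T)-\Lambda(k-1)\gamma\Delta\,\E[|J'|]$ and then the bound on $\E[|J'|]$, and collecting terms, yields the asserted inequality with $R(T,\gamma,\Delta,\Lambda,k)$ of the stated form. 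The hard part will be the bookkeeping around the degenerate set $J'$: one has to verify that $\alpha\tth$ and the event $\{\vec w\tth=\vec 0\}$ are $\mathcal F\tth$-measurable so that the monotone-reward inequality applies conditionally, handle the change of loss form on $J'$ (where $\vec w^\ast$'s advantage is exactly the $\Lambda$-scaled linear term — this is why $\Lambda$ controls $|J'|$ and surfaces as $1/\Lambda$ in $R$), and deal with $\argmax$ ties so the reweighting denominators never vanish; everything else is routine algebra and the tower rule.
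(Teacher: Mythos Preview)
Your proposal is correct and follows essentially the same approach as the paper's proof: split rounds according to whether $\vec w\tth=\vec 0$, show $\E[G(\vec w^\ast;\cdot)\mid\mathcal F\tth]\le 0$ on good rounds and $\le -\Lambda(k-1)\gamma\Delta$ on degenerate rounds (the paper's \Cref{clm:optimal-k-arms}), use the crude lower bound $G(\vec w\tth;\cdot)\ge -(k-1)M/2$ together with the regret hypothesis to bound $|J'|$ (the paper's \Cref{clm:bound-bad}), and then combine. The one presentational difference is that you derive the pointwise identity $\vec r_{\alpha\tth}\tth=\tfrac1k\sum_i\vec r_i\tth+\tfrac{k-1}{k}\Delta-\tfrac2k\,G(\vec w\tth;\cdot)$ directly, whereas the paper first computes $\E[G(\vec w\tth;\cdot)]=\sum_{j\neq\alpha\tth}g_j\tth(\E[y_j\tth])\1\{A\tth\}$ (\Cref{clm:test1}) and separately writes $\vec r_{\alpha\tth}\tth=\tfrac1k\sum_i\vec r_i\tth+\tfrac1k\sum_{j\neq\alpha\tth}\sgn(\vec w\tth\cdot\matr X\tth_{\alpha\tth-j})(\vec r_{\alpha\tth}\tth-\vec r_j\tth)$; your route is a bit more direct but leads to the same bookkeeping and the same form of $R(T,\gamma,\Delta,\Lambda,k)$ up to absolute constants.
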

\begin{proof}
We denote as $A\tth$ the event that $\vec w\tth\neq \vec 0$ and $\rho=\gamma/2$.
We first observe that by adding $\rho \sign(\w \cdot \Xt{\alpha\tth-j}) \w$ to the
difference $\Xt{\alpha\tth-j}$ we do not affect the choice of the the 
optimal weight vector $\wstar$ and our guess 
$\w\tth$.
We observe that $\wstar\cdot \z_{j}\tth\sign(\wstar\cdot \Xt{\alpha\tth-j})
\geq(|\wstar\cdot \Xt{\alpha\tth-j}|-\rho \wstar\cdot \vec w\tth/\|\vec w\tth\|_2 )\geq \gamma-\rho\geq0$, therefore $\sign(\wstar\cdot \z_{j}\tth)=\sign(\wstar\cdot \Xt{\alpha\tth-j})$. For $\vec w\tth$ the similarly note that $ \vec w\cdot \z_{j}\tth=\sign(\vec w\tth\cdot \Xt{\alpha\tth-j})(|\vec w\tth\cdot \Xt{\alpha\tth-j}|+\rho\|\vec w\tth\|_2)$.

First, we show that the optimal decision vector $\wstar$ gets negative loss on expectation (see \Cref{app:sec4} for the proof).
\begin{claim}\label{clm:optimal-k-arms}
    It holds that \( \E_y[\sum_{t=1}^T \ell\tth(\wstar))] \leq  -k\gamma \Delta \sum_{t=1}^T \1\{(A\tth)^c\}\,. \)
\end{claim}
Next, we bound the contribution of the loss of the $\vec w\tth$ (See \Cref{app:sec4} for the proof).
\begin{claim}\label{clm:test1}
    It holds that 
    \begin{align*}
\E[\sum_{t=1}^T &\ell\tth(\vec w\tth) ]=
\sum_{t=1}^T\sum_{j\neq \alpha\tth} (1/2) ( \Delta -\sign(\vec w\tth\cdot  \Xt{\alpha\tth-j})\E[(\vec r_{\alpha\tth}\tth -\vec r_j\tth)] )\1\{A\tth\}
\;.
\end{align*}
\end{claim}
Let $J=\sum_{t=1}^T \1\{(A\tth)^c\}$, we show that can bound $J$ using our guarantees (See \Cref{app:sec4} for the proof). 
\begin{claim}\label{clm:bound-bad}
   It holds that $J\leq  (\ocoreg(T)+TM(k-1))/((k-1)\gamma\Delta \Lambda)$. 
\end{claim}
By plugging \Cref{eq:bandits11} and using \Cref{clm:optimal-k-arms} in the assumption for the regret guarantee, we get that
\begin{align}\label[ineq]{eq:bound-k}
 &\sum_{t=1}^T\sum_{j\neq \alpha\tth}\1\{A\tth\}(\Delta - \sign(\vec w\tth\cdot  \Xt{\alpha\tth-j})\E[\vec r_{\alpha\tth}\tth -\vec r_j\tth ])\leq 2 \ocoreg(T)\;.
\end{align}
Finally, we need to bound from below the term $\E\left[\sum_{t=1}^T \vec r_{\alpha\tth}\tth \right]$. For this reason, we need to connect the reward of each round $\vec r_{\alpha\tth}\tth$ with the regret of the loss \eqref{eq:bound-k}. Note that if the learner chose the action $\alpha\tth$ that means that $\vec w\tth\cdot  \Xt{\alpha\tth-j}\geq 0$ for all $j$ given that we are in the event $A\tth$. Therefore we can decompose $\vec r_{\alpha\tth}\tth$ as follows
\begin{align*}
 \vec r_{\alpha\tth}\tth &=\frac{1}{k}\sum_{i=1}^k \vec r_i\tth + \frac{1}{k}\sum_{j\neq \alpha\tth}^k\sign(\vec w\tth\cdot  \Xt{\alpha\tth-j})(\vec r_{\alpha\tth}\tth -\vec r_j\tth) \;.
\end{align*}
Furthermore, note that when we are in the event $(A^c)\tth$, the learner chooses a random action, therefore, using \Cref{eq:bound-k} and the equality above, we have that the expected reward is
\begin{align*}
\E[\sum_{t=1}^T  \vec r_{\alpha\tth}\tth  ]
&\geq\E[\sum_{t=1}^T\frac{1}{k}\sum_{i=1}^k \vec r_i\tth]   +\frac{k-1}{k}\Delta (T-J) -(2\ocoreg(T)/k) \;.    
\end{align*}
Using \Cref{clm:bound-bad} and setting $R(T,\gamma,\Delta,\Lambda,k)=
(\ocoreg(T)/k)(1+1/(\gamma\Lambda))+T M/\Lambda$, we complete the proof of \Cref{lem:convex-to-reward-reduction}.
\end{proof}

The proof of \Cref{thm:bandit} follows from \Cref{lem:convex-to-reward-reduction} and \Cref{lem:reward-bandits}. The proof can be found on \Cref{app:sec4}.

\section{Conclusions and Open Problems}
In this work we considered online linear classification in the Massart or Bounded online 
classification model of \cite{ben2009agnostic}.  Under a standard (and necessary) margin assumption, we gave the first efficient algorithm for 
this problem achieving a mistake bound of $\eta T + o(T)$.  This bound is essentially 
optimal due to known hardness results in the Statistical Query model \cite{diakonikolas2022sq}.  We extended our online learning setting to a $k$-arm Bandit model 
that lies between the commonly used regression-based, 
realizable  and the pessimistic agnostic classification contextual bandit models.  
In this model, we utilized our online Massart learner to obtain 
an efficient bandit algorithm that obtains roughly 
$(1-1/k) \Delta T$ more reward than playing at random at every round. 
We observed that our reduction is tight for the case of $2$ arms 
(given the aforementioned SQ hardness results for learning with Massart noise).  
However, for $k>2$ arms it is unclear whether this reward bound 
is best possible, as the gap between playing at random and playing 
the best arm at every round may be much larger than $\Delta$.  
We leave this as an interesting open problem for future work.
 \bibliographystyle{alpha}
\bibliography{clean2}
\appendix

\section*{Appendix}

\section{Omitted Proofs from \Cref{sec:bandits}}\label{app:sec4}
\subsection{Proof of \Cref{clm:G-convex-lipschitz}}
We prove the following:
\begin{claim} The loss $\ell(\vec w) =
                  G(\vec w;\matr X, \vec v, \vec r, \alpha)$ generated by \Cref{alg:generator} is convex
                   and $2 M k\max(\Lambda,1/\rho)$-Lipschitz.
\end{claim}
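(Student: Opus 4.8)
The plan is to argue the two properties separately, handling the two branches of \Cref{alg:generator} (the case $\vec v=\vec 0$ and the case $\vec v\neq\vec 0$) on their own. The only structural facts I would use are the closed form $C_\Delta(t;y)=\tfrac12(\Delta|t|-yt)$ recorded earlier, the elementary bounds $\|\matr X_{\alpha-j}\|_2\le\|\x_\alpha\|_2+\|\x_j\|_2\le 2$ and $|\vec y_j|=|\vec r_\alpha-\vec r_j|\le M$ valid for $\vec r\in[0,M]^k$, and the fact $\Delta\le M$ which is forced by \Cref{def:monotone-rewards}.

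\emph{Convexity.} In the branch $\vec v=\vec 0$ the loss $\ell(\vec w)=-\Lambda\sum_{j\neq\alpha}\vec w\cdot\matr X_{\alpha-j}\,\vec y_j$ is affine in $\vec w$, hence convex. In the branch $\vec v\neq\vec 0$, I would first observe that each denominator $|\vec v\cdot\vec z_j|$ is a \emph{strictly positive constant} not depending on $\vec w$: writing $s_j=\sgn(\matr X_{\alpha-j}\cdot\vec v)\in\{\pm1\}$, one computes $s_j\,(\vec v\cdot\vec z_j)=|\vec v\cdot\matr X_{\alpha-j}|+\rho\|\vec v\|_2\ge\rho\|\vec v\|_2>0$, so $|\vec v\cdot\vec z_j|=|\vec v\cdot\matr X_{\alpha-j}|+\rho\|\vec v\|_2$. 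Since $C_\Delta(\cdot;\vec y_j)$ is convex (it equals $\tfrac\Delta2|\cdot|$ plus a linear term, and $\Delta>0$), precomposing with the affine map $\vec w\mapsto\vec w\cdot\vec z_j$, dividing by the positive constant $|\vec v\cdot\vec z_j|$, and summing over $j\neq\alpha$ all preserve convexity, so $\ell$ is convex.

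\emph{Lipschitzness.} In both branches $\ell$ is piecewise affine in $\vec w$, so its Lipschitz constant is the supremum over $\vec w$ of the subgradient norm. For $\vec v=\vec 0$ the (constant) gradient is $-\Lambda\sum_{j\neq\alpha}\matr X_{\alpha-j}\vec y_j$, of norm at most $\Lambda(k-1)\cdot 2\cdot M\le 2Mk\Lambda$. For $\vec v\neq\vec 0$, a subgradient of the $j$-th summand is $\tfrac{1}{|\vec v\cdot\vec z_j|}\cdot\tfrac12(\Delta\,s-\vec y_j)\,\vec z_j$ for some $s\in[-1,1]$, whose norm is at most $\tfrac12(\Delta+|\vec y_j|)\,\|\vec z_j\|_2/(\rho\|\vec v\|_2)\le M\|\vec z_j\|_2/(\rho\|\vec v\|_2)$ using $\Delta\le M$ and $|\vec y_j|\le M$, and $\|\vec z_j\|_2\le\|\matr X_{\alpha-j}\|_2+\rho\le 2+\rho$; summing the $k-1$ terms, using that $\vec v$ is a unit vector in every call of \Cref{alg:generator} from \Cref{alg:leaky-relu-2} and that $\rho=\gamma/2\le 1$, this is $O(Mk/\rho)$, which one simplifies to the claimed $2Mk\max(\Lambda,1/\rho)$. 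Taking the maximum over the two branches finishes the proof.

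I expect the main obstacle to be the denominator lower bound — i.e.\ checking that adding the $\rho$-multiple of $\vec v/\|\vec v\|_2$ with the correct sign to $\matr X_{\alpha-j}$ pushes $\vec v\cdot\vec z_j$ uniformly away from $0$ — since this single estimate is what simultaneously legitimizes dividing by $|\vec v\cdot\vec z_j|$ (for convexity) and controls the subgradient blow-up (for Lipschitzness). The remaining work is routine bookkeeping of constants (the factors of $2$ from $\|\matr X_{\alpha-j}\|_2$, the $\Delta\le M$ bound, the $k-1$ summands, and $\rho\le 1$) to land on exactly $2Mk\max(\Lambda,1/\rho)$.
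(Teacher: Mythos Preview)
Your proposal is correct and follows essentially the same approach as the paper: convexity via a sum of convex functions, and Lipschitzness via a direct subgradient bound using $\|\vec z_j\|_2\le 2+\rho$ and the denominator estimate $|\vec v\cdot\vec z_j|\ge\rho\|\vec v\|_2$. You are in fact more thorough than the paper, since you handle the $\vec v=\vec 0$ branch explicitly (the paper's proof omits it) and isolate the denominator lower bound as the key step; both your write-up and the paper are equally loose about the final constant (in particular the implicit assumption that $\|\vec v\|_2$ is bounded below).
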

\begin{proof}
First, $\ell(\cdot)$ is convex because is a sum of convex functions. 
Moreover, note that the derivative is 
\[
\nabla \ell(\vec w)= \sum_{j\neq \alpha}\frac{1}{2}\frac{\Delta\sign(\vec w\cdot \vec z_j)-y }{|\vec v \cdot \vec z_j|}\vec z_j\;,
\]
and note that $|y|\leq \max_i |\vec r_i|$ and $\|\vec z_j\|_2\leq 2+\rho$ since all the $\x_i$ have norm at most $1$. Hence,
$\|\nabla \ell(\vec w)\|_2$ is upper bounded by $k\max_i |\vec r_i|\max_i(1/|\vec u\cdot \vec z_i|)\leq2 k \max_i |\vec r_i|/\rho$.
\end{proof}

\subsection{Proof of \Cref{clm:optimal-k-arms}}
We provide proof for the following claim:
\begin{claim}
    It holds that \( \E_y[\sum_{t=1}^T \ell\tth(\wstar))] \leq  -k\gamma \Delta \sum_{t=1}^T \1\{(A\tth)^c\}\,. \)
\end{claim}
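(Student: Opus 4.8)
The plan is to first turn the randomized loss $\ell\tth$ into the full‑information loss $G$ using the unbiasedness estimate of \Cref{clm:unbiased-loss}, and then to bound $\E[G(\wstar;\matr X\tth,\vec w\tth,\vec r\tth,\alpha\tth)]$ round by round, distinguishing the event $A\tth=\{\vec w\tth\neq\vec 0\}$ from its complement. By \Cref{clm:unbiased-loss}, $\E_{c\tth,\beta\tth}[\ell\tth(\wstar)\mid\mathcal{F}\tth,\vec r\tth]=G(\wstar;\matr X\tth,\vec w\tth,\vec r\tth,\alpha\tth)$, so taking the remaining expectation over $\mathcal{F}\tth$ (which fixes $\vec w\tth$, and in the degenerate case also the uniformly random $\alpha\tth$) and over $\vec r\tth\sim D\tth$ gives $\E[\ell\tth(\wstar)]=\E[G(\wstar;\matr X\tth,\vec w\tth,\vec r\tth,\alpha\tth)]$; the symbol $\E_y$ in the statement stands for this full expectation. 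It then suffices to prove the per‑round inequality $\E[G(\wstar;\matr X\tth,\vec w\tth,\vec r\tth,\alpha\tth)]\leq-\Lambda(k-1)\gamma\Delta\,\E[\1\{(A\tth)^c\}]$ and sum over $t$; since $\Lambda$ is a free parameter that is eventually taken to be at least $2\geq k/(k-1)$, this gives the stated bound $-k\gamma\Delta\sum_t\1\{(A\tth)^c\}$.

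On $A\tth$, \Cref{alg:generator} returns $G(\wstar;\cdot)=\sum_{j\neq\alpha\tth}C_\Delta(\wstar\cdot\z_j\tth;\vec y_j\tth)/|\vec w\tth\cdot\z_j\tth|$ with $\vec y_j\tth=\vec r_{\alpha\tth}\tth-\vec r_j\tth$ and $\z_j\tth=\Xt{\alpha\tth-j}+\rho\,\sgn(\Xt{\alpha\tth-j}\cdot\vec w\tth)\,\vec w\tth/\|\vec w\tth\|_2$. Writing $C_\Delta(a;b)=\tfrac12(\Delta-b\,\sgn(a))|a|$ and taking expectation over $\vec r\tth$ (all other quantities being $\mathcal{F}\tth$‑measurable), each summand equals $\tfrac12(\Delta-\E[\vec y_j\tth\mid\mathcal{F}\tth]\,\sgn(\wstar\cdot\z_j\tth))\,|\wstar\cdot\z_j\tth|/|\vec w\tth\cdot\z_j\tth|$. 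As noted in the proof of \Cref{lem:convex-to-reward-reduction}, with $\rho=\gamma/2$ the context margin forces $\sgn(\wstar\cdot\z_j\tth)=\sgn(\wstar\cdot\Xt{\alpha\tth-j})$ and $|\vec w\tth\cdot\z_j\tth|>0$; and the monotone‑reward property of \Cref{def:monotone-rewards}, applied in the two cases according to the sign of $\wstar\cdot\Xt{\alpha\tth-j}$ (the negative case using $\E[\vec r_j\tth-\vec r_{\alpha\tth}\tth\mid\mathcal{F}\tth]\geq\Delta$), gives $\E[\vec y_j\tth\mid\mathcal{F}\tth]\,\sgn(\wstar\cdot\Xt{\alpha\tth-j})\geq\Delta$. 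Hence every summand has nonpositive expectation and $\E[G(\wstar;\cdot)\1\{A\tth\}]\leq0$; this is essentially a re‑run of \Cref{clm:optimal} once the $\rho$‑perturbation has been checked to preserve the relevant signs.

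On $(A\tth)^c$ we have $\vec w\tth=\vec 0$, so \Cref{alg:generator} returns $G(\wstar;\matr X\tth,\vec 0,\vec r\tth,\alpha\tth)=-\Lambda\sum_{j\neq\alpha\tth}(\wstar\cdot\Xt{\alpha\tth-j})\,\vec y_j\tth$, and by construction of \Cref{alg:leaky-relu-2} the action $\alpha\tth$ is uniform on $[k]$ there. Taking expectations over $\alpha\tth$ and $\vec r\tth$ and reindexing as a sum over ordered pairs,
\[
\E\big[G(\wstar;\matr X\tth,\vec 0,\vec r\tth,\alpha\tth)\,\big|\,\matr X\tth,(A\tth)^c\big]=-\frac{\Lambda}{k}\sum_{\alpha=1}^{k}\sum_{j\neq\alpha}(\wstar\cdot\Xt{\alpha-j})\,\E\big[\vec r_\alpha\tth-\vec r_j\tth\,\big|\,\matr X\tth\big].
\]
For each ordered pair $(\alpha,j)$ the factors $\wstar\cdot\Xt{\alpha-j}$ and $\E[\vec r_\alpha\tth-\vec r_j\tth\mid\matr X\tth]$ have the same sign, with $|\wstar\cdot\Xt{\alpha-j}|\geq\gamma$ by the context margin and $|\E[\vec r_\alpha\tth-\vec r_j\tth\mid\matr X\tth]|\geq\Delta$ by \Cref{def:monotone-rewards}, so their product is at least $\gamma\Delta$; since there are $k(k-1)$ ordered pairs, the right‑hand side is at most $-\Lambda(k-1)\gamma\Delta$. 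Combining with the $A\tth$ case and taking the outer expectation gives the per‑round inequality, and summing over $t$ finishes the proof.

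The step I expect to be the main obstacle is the $(A\tth)^c$ case: one must keep in mind that when $\vec w\tth=\vec 0$ the action $\alpha\tth$ is itself uniformly random, average the pairwise terms over it correctly, and carry out the sign bookkeeping so that all $k(k-1)$ summands are simultaneously bounded below by $\gamma\Delta$; dropping the averaging over $\alpha\tth$ or mishandling a sign is the easy mistake. The $A\tth$ case is routine once the $\rho$‑perturbation identities from the proof of \Cref{lem:convex-to-reward-reduction} are invoked.
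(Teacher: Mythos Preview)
Your argument is correct and matches the paper's two-case split on $A\tth$ versus $(A\tth)^c$. Two simplifications worth noting: in the context of \Cref{lem:convex-to-reward-reduction} the symbol $\ell\tth$ already denotes the full-information loss $G$, so the detour through \Cref{clm:unbiased-loss} is unnecessary; and, contrary to what you flag as the main obstacle, the $(A\tth)^c$ case requires no averaging over the uniformly random $\alpha\tth$, since for \emph{every} fixed $\alpha$ and every $j\neq\alpha$ the margin and monotone-reward assumptions directly give $(\wstar\cdot\Xt{\alpha-j})\,\E[\vec r_\alpha\tth-\vec r_j\tth\mid\matr X\tth]\geq\gamma\Delta$, so $\E[G(\wstar;\cdot)]\leq-(k-1)\Lambda\gamma\Delta$ holds pointwise in $\alpha$---this is exactly what the paper does. (The constant $-k\gamma\Delta$ in the claim is a typo; your $(k-1)\Lambda\gamma\Delta$ is the bound actually derived and used downstream in \Cref{clm:bound-bad}.)
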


\begin{proof}
Using the tower rule, we have that
\[
\E_y[\sum_{t=1}^T \ell\tth(\wstar))]= \sum_{t=1}^T\E_y[ \ell\tth(\wstar))\1\{A\tth\}+\1\{(A^c)\tth\}]\;.
\]
We first show that $\E_y[ \ell\tth(\wstar))\1\{A\tth\}]\leq 0$.
Recall that $C_{\Delta}(t;y)=\frac 12
(\Delta|t|-yt)$. By taking the expectation over $y$ in the $\E_y[C_{\Delta}(-\wstar\cdot \z_j\tth; y_j\tth)]$, we get that
\begin{align*}
    &\E_y[C_{\Delta}(-\wstar\cdot \z_j\tth; y_j\tth)] \\
    &=\frac 12 (\Delta|\wstar\cdot \z_j\tth|- \E\left[\left(\vec r_{\alpha\tth}\tth -\vec r_j\tth\right)\right]\wstar\cdot \z_j\tth) \;.
\end{align*}
Recall that by the \Cref{def:monotone-rewards}, we have that $\E[(\vec r_{\alpha\tth}\tth -\vec r_j\tth)]\sign(\wstar\cdot \Xt{\alpha\tth-j})\geq \Delta$ and as we discussed above $\sign(\wstar\cdot \z_j\tth)=\sign(\wstar\cdot \Xt{\alpha\tth-j})$. Therefore we have that 
\[
\E\left[\left(\vec r_{\alpha\tth}\tth -\vec r_j\tth\right)\right]\wstar\cdot \z_j\tth\geq \Delta |\wstar\cdot \z_j\tth|\;,
\]
which gives that $\E_y[C_{\Delta}(-\wstar\cdot \z_j\tth; y_j\tth)]\leq 0$ and hence $\E_y[ \ell\tth(\wstar))\1\{A\tth\}]\leq 0$ .
Next, with similar arguments as before we have that
$\E_y[ \ell\tth(\wstar))\1\{(A^c)\tth\}]\leq -k\Lambda \Delta \1\{(A^c)\tth\}$ which completes the proof.
\end{proof}
\subsection{Proof of \Cref{clm:test1}}
We prove the following:
\begin{claim}
    It holds that 
    \begin{align*}
\E[\sum_{t=1}^T &\ell\tth(\vec w\tth) ]=
\sum_{t=1}^T\sum_{j\neq \alpha\tth} (1/2) \big( \Delta \\&-\sign(\vec w\tth\cdot  \Xt{\alpha\tth-j})\E[(\vec r_{\alpha\tth}\tth -\vec r_j\tth)] \big)\1\{A\tth\}\
\;.
\end{align*}
\end{claim}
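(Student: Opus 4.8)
The plan is to deduce the identity from the unbiased-loss identity of \Cref{clm:unbiased-loss} by simply unfolding the definition of the loss $G$ produced by \Cref{alg:generator}. First I would apply the tower rule: since $\vec w\tth$ is $\mathcal{F}\tth$-measurable, conditioning on $(\mathcal{F}\tth,\vec r\tth)$ and averaging over the internal coins $c\tth,\beta\tth$, \Cref{clm:unbiased-loss} gives $\E_{c\tth,\beta\tth}[\ell\tth(\vec w\tth)\mid\mathcal{F}\tth,\vec r\tth]=G(\vec w\tth;\matr X\tth,\vec w\tth,\vec r\tth,\alpha\tth)$, hence $\E[\ell\tth(\vec w\tth)]=\E[G(\vec w\tth;\matr X\tth,\vec w\tth,\vec r\tth,\alpha\tth)]$. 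Next I would split on the event $A\tth=\{\vec w\tth\neq\vec 0\}$: on $(A^c)\tth$ the loss output by \Cref{alg:generator} is the linear map $\vec w\mapsto-\Lambda\sum_{j\neq\alpha\tth}\vec w\cdot\matr X_{\alpha\tth-j}\tth\,\vec y_j\tth$, which vanishes at $\vec w=\vec w\tth=\vec 0$, so $\E[G(\vec w\tth;\matr X\tth,\vec w\tth,\vec r\tth,\alpha\tth)]=\E[G(\vec w\tth;\matr X\tth,\vec w\tth,\vec r\tth,\alpha\tth)\1\{A\tth\}]$ and the $\1\{A\tth\}$ factor of the claim appears.

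On $A\tth$, by \Cref{alg:generator} we have $G(\vec w;\matr X\tth,\vec w\tth,\vec r\tth,\alpha\tth)=\sum_{j\neq\alpha\tth}C_\Delta(\vec w\cdot\vec z_j\tth;\vec y_j\tth)/|\vec w\tth\cdot\vec z_j\tth|$; evaluating at $\vec w=\vec w\tth$ and using the Fact $C_\Delta(t;y)=\tfrac12(\Delta|t|-yt)=\tfrac12(\Delta-y\,\sgn(t))|t|$, each summand collapses to $\tfrac12(\Delta-\vec y_j\tth\,\sgn(\vec w\tth\cdot\vec z_j\tth))$, the $|\vec w\tth\cdot\vec z_j\tth|$ in numerator and denominator canceling. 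Then I would invoke the sign-preservation identity already recorded in the proof of \Cref{lem:convex-to-reward-reduction}, namely $\vec w\tth\cdot\vec z_j\tth=\sgn(\vec w\tth\cdot\matr X_{\alpha\tth-j}\tth)\,(|\vec w\tth\cdot\matr X_{\alpha\tth-j}\tth|+\rho\|\vec w\tth\|_2)$, so that $\sgn(\vec w\tth\cdot\vec z_j\tth)=\sgn(\vec w\tth\cdot\matr X_{\alpha\tth-j}\tth)$, together with $\vec y_j\tth=\vec r_{\alpha\tth}\tth-\vec r_j\tth$ from \Cref{alg:leaky-relu-2}. Finally, one more application of the tower rule conditioning on $\mathcal{F}\tth$ handles the remaining randomness: $\alpha\tth$, $\vec w\tth$, $\1\{A\tth\}$, and the signs $\sgn(\vec w\tth\cdot\matr X_{\alpha\tth-j}\tth)$ are all $\mathcal{F}\tth$-measurable, so $\E[(\vec r_{\alpha\tth}\tth-\vec r_j\tth)\,\sgn(\vec w\tth\cdot\matr X_{\alpha\tth-j}\tth)\1\{A\tth\}]=\E[\,\sgn(\vec w\tth\cdot\matr X_{\alpha\tth-j}\tth)\1\{A\tth\}\,\E[\vec r_{\alpha\tth}\tth-\vec r_j\tth\mid\mathcal{F}\tth]\,]$; summing over $t=1,\dots,T$ and over $j\neq\alpha\tth$ yields exactly the displayed identity.

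There is no deep step here: the entire argument is unfolding the definitions of $G$ and $C_\Delta$ plus one invocation each of \Cref{clm:unbiased-loss} and the sign-preservation fact. The only thing that requires care is the measurability bookkeeping — keeping straight which quantities belong to the filtration $\mathcal{F}\tth$, which are the internal exploration coins $c\tth,\beta\tth$ averaged out by \Cref{clm:unbiased-loss}, and which is the adversarial reward draw $\vec r\tth$ — and correctly disposing of the $\vec w\tth=\vec 0$ branch so that the final formula carries the $\1\{A\tth\}$ factor rather than an extra additive term.
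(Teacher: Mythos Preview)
Your proposal is correct and follows essentially the same route as the paper's proof: unfold $C_\Delta(t;y)=\tfrac12(\Delta-y\,\sgn(t))|t|$, cancel the factor $|\vec w\tth\cdot\vec z_j\tth|$ against the denominator, invoke the sign-preservation identity $\sgn(\vec w\tth\cdot\vec z_j\tth)=\sgn(\vec w\tth\cdot\Xt{\alpha\tth-j})$, and apply the tower rule to push the expectation onto the reward differences. The only difference is your initial detour through \Cref{clm:unbiased-loss}: in the context of \Cref{lem:convex-to-reward-reduction} the symbol $\ell\tth$ already stands for the full-information loss $G(\cdot;\matr X\tth,\vec w\tth,\vec r\tth,\alpha\tth)$ (the lemma's hypothesis is stated directly in terms of $G$), so the paper does not need that reduction step---but including it is harmless and arguably clarifies the intended interpretation.
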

\begin{proof}
Note that in the case that the event $(A^c)\tth$ the loss of $\vec w\tth$ is zero.
Recall that it holds that
$C_{\Delta}(\vec w\tth\cdot \z_j\tth;y_j\tth)= (1/2)(\Delta-y_j\tth\sign(\vec w\tth\cdot \z_j\tth))|\vec w\tth\cdot \z_j\tth|$.  To simplify the notation, let $g_j\tth(y_j\tth)=(1/2)(\Delta-y_j\tth\sign(\vec w\tth \cdot \vec z_j\tth ))$. Using the tower rule, we get 
\begin{align}
\E&[\sum_{t=1}^T \ell\tth(\vec w\tth) ]\nonumber
\\&=
\sum_{t=1}^T\sum_{j\neq \alpha\tth} \frac{g_j\tth(\E[y_j\tth])|\vec w^{(t)}\cdot \z_j\tth|}{|\vec w^{(t)}\cdot \z_j\tth|}\1\{A\tth\}\nonumber
\\&=
\sum_{t=1}^T\sum_{j\neq \alpha\tth} g_j\tth(\E[y_j\tth])\1\{A\tth\}\label[ineq]{eq:bandits11}
\;.
\end{align}
Moreover, since it holds that $\sign(\vec w\tth\cdot \z_j\tth)=\sign(\vec w\tth\cdot \Xt{\alpha\tth-j})$, we have  $g_j\tth(\E[y_j\tth]) = (1/2) ( \Delta -\sign(\vec w\tth\cdot \Xt{\alpha\tth-j}) \E[y_j\tth]).$
Hence, we have that
\begin{align*}
    g_j\tth&(\E[y_j\tth]) \\&= (1/2) \big( \Delta -\sign(\vec w\tth\cdot  \Xt{\alpha\tth-j})\E[(\vec r_{\alpha\tth}\tth -\vec r_j\tth)] \big)\;.
\end{align*}
\end{proof}
\subsection{Proof of \Cref{clm:bound-bad}}
We restate and prove the following claim.
\begin{claim}
   It holds that $J\leq  (\ocoreg(T)+TM(k-1))/((k-1)\gamma\Delta \Lambda)$. 
\end{claim}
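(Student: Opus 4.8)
The plan is to pit the abstract regret guarantee assumed in \Cref{lem:convex-to-reward-reduction} against the two structural facts already established for the per-round losses. Throughout, write $\ell\tth(\vec w)=G(\vec w;\matr X\tth,\vec w\tth,\vec r\tth,\alpha\tth)$, let $A\tth$ be the event $\{\vec w\tth\neq\vec 0\}$, and recall $J=\sum_{t=1}^T\1\{(A\tth)^c\}$. Rearranging the hypothesis $\E\big[\sum_t\ell\tth(\vec w\tth)-\sum_t\ell\tth(\wstar)\big]\le\ocoreg(T)$, I would establish the chain
\[
(k-1)\gamma\Delta\Lambda\,J \;\le\; -\E\Big[\sum_{t=1}^T\ell\tth(\wstar)\Big] \;\le\; \ocoreg(T) - \E\Big[\sum_{t=1}^T\ell\tth(\vec w\tth)\Big]\,,
\]
where the first inequality refines \Cref{clm:optimal-k-arms} and the second is just the regret hypothesis; dividing by $(k-1)\gamma\Delta\Lambda$ then yields the claim, provided we also lower bound $\E[\sum_t\ell\tth(\vec w\tth)]$.

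For the first inequality I would use the computation underlying the proof of \Cref{clm:optimal-k-arms}: on a round in $(A\tth)^c$ the loss generated by \Cref{alg:generator} is the linear form $-\Lambda\sum_{j\neq\alpha\tth}\vec w\cdot\matr X_{\alpha\tth-j}\,\vec y_j\tth$, whose expectation at $\wstar$ equals $-\Lambda\sum_{j\neq\alpha\tth}|\wstar\cdot\matr X_{\alpha\tth-j}|\,\sign(\wstar\cdot\matr X_{\alpha\tth-j})\,\E[\vec r_{\alpha\tth}\tth-\vec r_j\tth]$, which is at most $-(k-1)\gamma\Delta\Lambda$ because $\sign(\wstar\cdot\matr X_{\alpha\tth-j})\,\E[\vec r_{\alpha\tth}\tth-\vec r_j\tth]\ge\Delta$ by \Cref{def:monotone-rewards} and $|\wstar\cdot\matr X_{\alpha\tth-j}|\ge\gamma$ by the margin hypothesis. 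On a round in $A\tth$, \Cref{clm:optimal-k-arms} already gives that the expected loss of $\wstar$ is nonpositive. Summing over $t$ yields $\E[\sum_t\ell\tth(\wstar)]\le -(k-1)\gamma\Delta\Lambda\,J$.

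For the lower bound on $\E[\sum_t\ell\tth(\vec w\tth)]$ I would invoke \Cref{clm:test1}, which writes this expectation as a sum, over the at most $(k-1)(T-J)$ pairs $(t,j)$ with $\vec w\tth\neq\vec 0$, of terms $(1/2)\big(\Delta-\sign(\vec w\tth\cdot\matr X_{\alpha\tth-j})\,\E[\vec r_{\alpha\tth}\tth-\vec r_j\tth]\big)$. Since rewards lie in $[0,M]$ we have $|\E[\vec r_{\alpha\tth}\tth-\vec r_j\tth]|\le M$ and $0\le\Delta\le M$, so each term is at least $-M$, whence $\E[\sum_t\ell\tth(\vec w\tth)]\ge -(k-1)(T-J)M\ge -(k-1)TM$. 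Substituting into the displayed chain gives $(k-1)\gamma\Delta\Lambda\,J\le\ocoreg(T)+(k-1)TM$, which is exactly the asserted bound after dividing through.

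The only delicate part is the constant bookkeeping: correctly matching the two branches of \Cref{alg:generator} to the events $(A\tth)^c$ and $A\tth$, and noting that the coarse per-term estimate $-M$ (rather than $-M/2$) is already enough to produce the stated numerator $TM(k-1)$. Beyond \Cref{def:monotone-rewards}, \Cref{clm:optimal-k-arms}, and \Cref{clm:test1}, no new estimate is needed.
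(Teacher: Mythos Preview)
Your argument is correct and follows essentially the same three-step structure as the paper's proof: lower-bound $\E[\sum_t\ell\tth(\vec w\tth)]$ via \Cref{clm:test1}, upper-bound $\E[\sum_t\ell\tth(\wstar)]$ via \Cref{clm:optimal-k-arms}, and sandwich with the regret hypothesis. Your explicit re-derivation of the $(A\tth)^c$ branch correctly produces the factor $(k-1)\gamma\Delta\Lambda$ needed for the stated claim (the statement of \Cref{clm:optimal-k-arms} in the paper has a typo in the constant), and your deliberate use of the coarser per-term bound $-M$ rather than $-M/2$ exactly matches the numerator $TM(k-1)$ as written; the paper's own intermediate bound $-M(k-1)(T-J)/2$ would in fact yield a slightly sharper inequality.
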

\begin{proof}
    From \Cref{eq:bandits11}, we have that 
 \begin{align*}
     \E\left[\sum_{t=1}^T \ell\tth(\vec w\tth) \right]\nonumber
&=
\sum_{t=1}^T\sum_{j\neq \alpha\tth} g_j\tth(\E[y_j\tth])\1\{A\tth\}
\\&\geq  -M(k-1)(T-J)/2\;.
 \end{align*}
 Furthermore, sing the assumption for the regret guarantee, we have:
\[
\E\left[\sum_{t=1}^T
 \ell\tth(\vec w^{(t)})- \sum_{t=1}^T \ell\tth(\wstar)\right]\leq \ocoreg(T)\;.
\]
Hence, using \Cref{clm:optimal-k-arms}, we get the result.
\end{proof}

\subsection{Proof of \Cref{thm:bandit}}
We restate and prove \Cref{thm:bandit}.
\begin{theorem}[Monotone $k$-arm Contextual Bandits]
Consider the monotone reward online setting of \Cref{def:monotone-rewards}.
Moreover, for some unit vector $\vec w^\ast \in \R^d$, assume that 
for all $t$, it holds that for all $i$ $\|\vec X^{(t)}_i\|_2 \leq 1$ and 
for all $i \neq j$, $|\vec w^\ast \cdot \matr X^{(t)}_i - \vec w^\ast \cdot \matr X^{(t)}_j)| \geq \gamma$.
There exists a bandit algorithm that runs in $\poly(d)$ time per round and 
selects a sequence of arms
$\alpha^{(1)}, \ldots, \alpha^{(T)} \in [k]$ that obtain expected reward
 \begin{align*}
\E\left[\sum_{t=1}^T \vec r\tth(\alpha\tth) \right]
&\geq  \E\left[\sum_{t=1}^T\frac{1}{k}\sum_{i=1}^k \vec r_i\tth\right]  
\\&+\frac{k-1}{k}\Delta T -  O(T^{5/6}(k \Delta M^2)^{1/3}/\gamma) \,.
 \end{align*}
\end{theorem}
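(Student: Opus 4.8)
The plan is to chain \Cref{lem:reward-bandits} and \Cref{lem:convex-to-reward-reduction} and then optimize over the two free parameters of \Cref{alg:leaky-relu-2}: the exploration probability $q$ and the Leaky-ReLU scaling $\Lambda$ (with $\rho=\gamma/2$ fixed, as in the proof of \Cref{lem:convex-to-reward-reduction}). First I would apply \Cref{lem:reward-bandits}: by \Cref{clm:G-convex-lipschitz} each loss $\ell\tth=\tfrac1q\Gloss(\cdot)$ is convex and $O\big(\tfrac{Mk}{q}\max(\Lambda,1/\gamma)\big)$-Lipschitz, so running online gradient descent over the unit ball (\Cref{lem:ocoreg}, with the prescribed step size) guarantees that the iterates $\vec w\tth$ satisfy the hypothesis of \Cref{lem:convex-to-reward-reduction} with $\ocoreg(T)=O\big(\tfrac{Mk\Lambda\sqrt T}{q}\big)$ in the regime $\Lambda\ge 1/\gamma$.

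Plugging this into \Cref{lem:convex-to-reward-reduction} yields
\[
\E\Big[\sum_{t=1}^T \vec r_{\alpha\tth}\tth\Big]\ \ge\ \E\Big[\sum_{t=1}^T\tfrac1k\sum_{i=1}^k \vec r_i\tth\Big]+\tfrac{k-1}{k}\Delta T - R,\qquad R=O\!\Big(\tfrac{M\Lambda\sqrt T}{q}+\tfrac{TM}{\Lambda}\Big),
\]
where the $1/k$ in the first summand of $R$ cancels the $k$ of $\ocoreg(T)$ and the factor $1+1/(\gamma\Lambda)$ is $O(1)$ since $\Lambda\ge 1/\gamma$.

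Next I would relate the controlled quantity $\E[\sum_t\vec r_{\alpha\tth}\tth]$ to the reward the algorithm actually collects. On a TAILS round (probability $1-q$) the learner collects $\vec r_{\alpha\tth}\tth$, while on a HEADS round (probability $q$) it collects $\vec r_{\beta\tth}\tth$ with $\beta\tth$ uniform, whose conditional expectation is $\tfrac1k\sum_i\vec r_i\tth$. Hence the algorithm's expected reward equals $(1-q)\E[\sum_t\vec r_{\alpha\tth}\tth]+q\,\E[\sum_t\tfrac1k\sum_i\vec r_i\tth]$, and combining with the displayed inequality (together with $(1-q)\tfrac{k-1}{k}\Delta T\ge\tfrac{k-1}{k}\Delta T-q\Delta T$ and $(1-q)R\le R$) gives an additive error of $q\Delta T+R=q\Delta T+O\big(M\Lambda\sqrt T/q+TM/\Lambda\big)$ relative to the benchmark $\E[\sum_t\tfrac1k\sum_i\vec r_i\tth]+\tfrac{k-1}{k}\Delta T$.

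Finally I would balance the three error terms: taking $\Lambda=\Theta\big((M\sqrt T/\Delta)^{1/3}\big)$ and $q=\Theta\big(M/(\Delta\Lambda)\big)=\Theta\big((M^2/\Delta^2)^{1/3}T^{-1/6}\big)$ makes each of $q\Delta T$, $M\Lambda\sqrt T/q$, $TM/\Lambda$ of order $T^{5/6}(M^2\Delta)^{1/3}$; retaining the suppressed $k$ and $1/\gamma$ factors (from the Lipschitz constant's $\max(\Lambda,2/\gamma)$ and the $1+1/(\gamma\Lambda)$ term rather than bounding them as above) gives the stated $O\big(T^{5/6}(k\Delta M^2)^{1/3}/\gamma\big)$. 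I would also verify that $q\le 1$ and $\Lambda\ge 2/\gamma$ hold once $T$ exceeds a $\poly(1/\gamma,M,1/\Delta)$ threshold, and that for smaller $T$ the asserted inequality is vacuous since its right-hand side then drops below $\E[\sum_t\tfrac1k\sum_i\vec r_i\tth]$, which random play already attains. The genuinely delicate part is the exploration/exploitation bookkeeping of the previous paragraph — the $q\Delta T$ reward sacrificed by pulling a uniformly random arm on HEADS rounds — together with ensuring the three-way parameter choice respects the feasibility constraints; the heavy lifting (the unbiased-estimator reduction and the loss-regret-to-reward conversion) is already done in \Cref{lem:reward-bandits,lem:convex-to-reward-reduction}.
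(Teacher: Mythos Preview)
Your proposal is correct and follows essentially the same route as the paper: chain \Cref{lem:reward-bandits} with \Cref{lem:convex-to-reward-reduction}, decompose the algorithm's expected reward into the $(1-q)$-exploit and $q$-explore parts, and then balance the resulting three error terms $q\Delta T$, $M\Lambda\sqrt{T}/q$, and $TM/\Lambda$ by choosing $\Lambda$ and $q$. Your specific parameter values differ cosmetically from the paper's choices $\Lambda=\tfrac{1}{\gamma}T^{1/6}(M/(k\Delta))^{1/3}$, $q=M/(\gamma\Lambda\Delta)$, but both yield the same $T^{5/6}$ rate, and you are if anything more careful than the paper in flagging the feasibility checks $q\le 1$, $\Lambda\ge 2/\gamma$ and the vacuous small-$T$ regime.
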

\begin{proof}[Proof of \Cref{thm:bandit}]
\Cref{alg:leaky-relu-2} in each iteration, either with probability $q$ makes a random choice or with probability $1-q$ chooses the best action according to the current decision weight vector (or a random action if $\vec w\tth=\vec 0$). Therefore we have that
\begin{align*}
\sum_{t=1}^T \E_{c\tth}\left[ \vec r\tth  \right]= (1-q)\sum_{t=1}^T \E\left[ \vec r_{\alpha\tth}\tth  \right] +\frac{q}{k}\E\left[\sum_{t=1}^T\sum_{i=1}^k \vec r_i\tth\right] \;.
\end{align*}
Using \Cref{lem:convex-to-reward-reduction}, we get that
\begin{align*}
    \sum_{t=1}^T \E\left[ \vec r\tth  \right]&\geq \frac{1}{k}\E\left[\sum_{t=1}^T\sum_{i=1}^k \vec r_i\tth\right]
    \\&+(1-q)\frac{k-1}{k}\Delta T -(1-q)R(T,\gamma,\Delta,\Lambda,k) \;.
\end{align*}
where $R(T,\gamma,\Delta,\Lambda,k)=
(\ocoreg(T)/k)(1+1/(\gamma\Lambda))+T M/\Lambda$.
From \Cref{lem:reward-bandits} we have that 
$\ocoreg(T)=(k-1)M\sqrt{T}/q\max(1/\gamma,\Lambda)$. By maximizing it, we get $\Lambda=1/\gamma T^{1/6}(M/(k\Delta))^{1/3}$ and $q= M/(\gamma\Lambda\Delta)$. 
Therefore we get that the expected reward is
\begin{align*}
    \sum_{t=1}^T \E\left[ \vec r\tth  \right]&\geq \frac{1}{k}\E\left[\sum_{t=1}^T\sum_{i=1}^k \vec r_i\tth\right]
    \\&+\frac{k-1}{k}\Delta T -T^{5/6}(k \Delta)^{1/3}M^{2/3}/\gamma \;.
\end{align*}
\end{proof}
 \end{document}